\newtcolorbox{conclusionbox}{
  colback=blue!5,        
  colframe=blue!75!black, 
  coltitle=black,        
  fonttitle=\bfseries,   
  boxrule=1pt,           
  arc=1mm,               
  left=2mm,              
  right=2mm,             
  top=1mm,               
  bottom=1mm,            
}
\title{Joint‑Embedding vs Reconstruction:\\Provable Benefits of Latent Space Prediction for Self‑Supervised Learning}
\author{
  Hugues Van Assel\thanks{Contact: \texttt{van\textunderscore assel.hugues@gene.com}}$\;^{\,1,2}$, Mark Ibrahim$^{\,3}$, Tommaso Biancalani$^{\,1}$, \\
  \textbf{Aviv Regev$^{\,1}$, Randall Balestriero}$^{\,2,3}$
  \\[1.2ex]
  {
    \textnormal{$^{1\,}$Genentech},
    \textnormal{$^{2\,}$Brown University},
    \textnormal{$^{3\,}$Meta AI, FAIR}
  }
}
\date{}
\begin{document}

\maketitle

\begin{abstract}
Reconstruction and joint-embedding have emerged as two leading paradigms in Self‑Supervised Learning (SSL). 
Reconstruction methods focus on recovering the original sample from a different view in input space. On the other hand, joint-embedding methods align the representations of different views in latent space.
Both approaches offer compelling advantages, yet practitioners lack clear guidelines for choosing between them.
In this work, we unveil the core mechanisms that distinguish each paradigm. 
By leveraging closed-form solutions for both approaches, we precisely characterize how the view generation process, \textit{e.g.}\ data augmentation, impacts the learned representations.
We then demonstrate that, unlike supervised learning, both SSL paradigms require a minimal alignment between augmentations and irrelevant features to achieve asymptotic optimality with increasing sample size. 
Our findings indicate that in scenarios where these irrelevant features have a large magnitude, joint-embedding methods are preferable because they impose a strictly weaker alignment condition compared to reconstruction-based methods. 
These results not only clarify the trade-offs between the two paradigms but also substantiate the empirical success of joint-embedding approaches on real-world challenging datasets.
\end{abstract}

\section{Introduction}\label{sec:intro}

Training deep neural networks to extract informative data representations is central to AI. Numerous families of methods pursue this goal~\cite{payandeh_representation_2023}. In supervised learning, one does so by prescribing {\em labels} that encode what is considered informative in the data. While this has been the dominant approach to representation learning over the past decades, it has become clear that labels are often overly specialized. Such specialization prevents learning representations that transfer across an ever-increasing diversity of downstream tasks \cite{feng2022rethinking,islam_broad_2021}. Self-Supervised Learning (SSL) has emerged as an alternative that moves away from labels \cite{balestriero2023cookbook,misra2020self,chen2020simple}. In SSL, one does not assume \emph{a priori} what is informative; instead, one specifies which variations are uninformative and should be disregarded. Identifying, \emph{a priori}, the invariances a representation should satisfy is a broadly applicable principle.
For instance, many downstream tasks involving natural images, such as recognition, counting, and segmentation, are inherently robust to minor changes in color or illumination. 
Consequently, these tasks benefit from representations that exhibit such invariances, typically encoded through a data-augmentation process. Two primary families of methods have emerged to learn representations using this principle: \emph{reconstruction}-based and \emph{joint-embedding} approaches.

\subsection{The Reconstruction-based approach}

\begin{figure*}[t]
    \centering
    \includegraphics[width=0.95\textwidth]{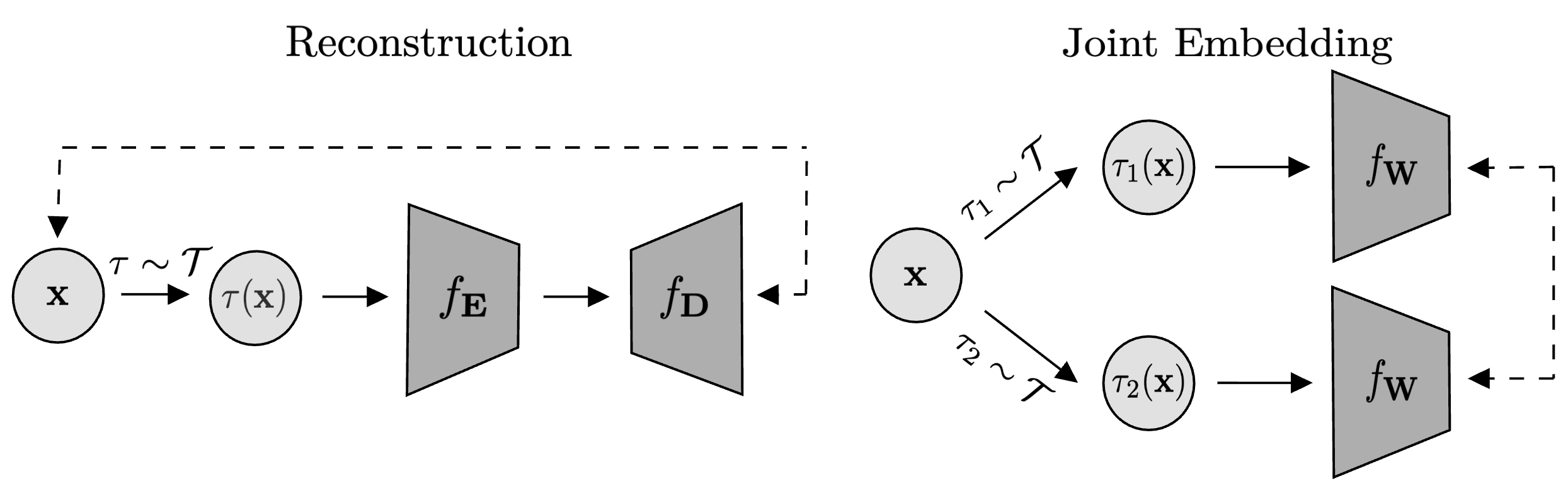}
    \caption[Reconstruction vs.~joint-embedding]{Two self-supervised learning paradigms studied in this work. 
    Left: Reconstruction problem of \Cref{eq:reconstruction_problem}: a random augmentation $\tau \sim \mathcal{T}$ is applied to $\vx$ to form $\tau(\vx)$. 
    An encoder $f_\mE$ together with a decoder $f_\mD$ is trained to recover $\vx$ from $\tau(\vx)$. 
    Right: Joint embedding problem of \Cref{eq:ssl}: two independent augmentations $\tau_1,\tau_2 \sim \mathcal{T}$ of the same $\vx$ are mapped by $f_\mW$ to nearby representations, while embeddings of different inputs are pushed apart.}
    \label{fig:intro_tikz}
\end{figure*}

\emph{Reconstruction}-based approaches train models by augmenting an input signal, typically by adding noise or masking, and then training the model to restore the original input \cite{kingma2013auto, vincent2010stacked,he2021masked,wenckstern2025ai} (left side of \cref{fig:intro_tikz}). This process encourages the model to learn meaningful internal representations of the data's underlying structure and content to enable successful reconstruction.
However, because the learning signal arises from minimizing reconstruction error in the input space, the model is naturally steered toward subspaces that explain the majority of the input’s variance \cite{wang2004image,balle2016end}. Whether such variance-explaining features are also the most semantically discriminative or useful for downstream tasks depends strongly on the data modality.

In language, reconstruction-based learning, as used in large language models, is highly effective because textual tokens represent compact, semantically meaningful units that already abstract away most low-level variability. Although data quality may vary, language in itself is a highly compressed and rich modality where reconstruction can prove highly successful. Predicting a missing token provides a learning signal that operates directly in semantic space: to succeed, the model must infer the contextual and syntactic relationships that determine meaning, rather than replicate surface patterns. Consequently, minimizing reconstruction error encourages the emergence of abstract relational representations, capturing compositionality, long-range dependencies, and discourse coherence that align closely with human notions of meaning and reasoning \cite{devlin2019bert,haslett2025much,wang2025word}.

In contrast, in visual domains, variance-explaining features often emphasize aspects of the data that are statistically dominant but semantically shallow. Unlike language, visual data are essentially sensorial recordings of the physical world, capturing raw information without inherent semantic compression. As a result, pixel-level reconstruction objectives tend to drive models toward capturing local statistics and textures that account for most of the input’s variance, rather than the higher-order structures and object-level relationships that underpin semantic understanding. This local bias can result in representations that are well-suited for low-level perceptual fidelity but suboptimal for recognition, categorization, or other tasks that depend on global context and semantic abstraction \cite{balestriero2024learning, gui2024survey}. Consequently, purely reconstruction-based approaches in computer vision often struggle to produce features that generalize well across tasks without additional supervision or adaptation. Fine-tuning is thus frequently necessary to bridge the gap between variance-focused representations and those that encode meaningful, task-relevant semantics \cite{he2021masked}.

\subsection{The Joint-embedding approach}

\emph{Joint-embedding} methods, in contrast, operate entirely in latent space (right side of \cref{fig:intro_tikz}). Their objective is to produce similar representations for different augmented views of the same input while ensuring that representations of distinct samples remain dissimilar. This separation can be enforced explicitly through a contrastive loss \cite{chen2020simple,he2020momentum}, or implicitly via architectural mechanisms such as self-distillation, stop-gradient operations, momentum encoders, or predictor heads that stabilize training and prevent representational collapse even without negative samples \cite{caron2021emerging,grill2020bootstrap,wen2022mechanism,jha2024common}.

Unlike \emph{reconstruction}-based approaches, joint-embedding methods do not predict in the input space and are therefore less biased toward capturing high-variance components of the signal. 
Empirically, joint-embedding frameworks have shown strong performance across domains where the input signal is high-dimensional and semantically diffuse. Successful applications span histopathology \cite{zimmermann2024virchow2}, Earth observation \cite{waldmann2025panopticon}, and video representation learning \cite{bardes2023v}. Despite this progress, the mechanisms through which latent consistency objectives outperform reconstruction-based ones remain poorly understood, motivating the analysis presented in this work.

\subsection{Contributions}

The critical role of the prediction target in SSL, specifically whether to predict in the input space (\emph{reconstruction}) or the latent representation space (\emph{joint-embedding}), has been demonstrated numerous times \cite{assran2023self,balestriero2024learning}. However, it remains unclear when to favor one approach over the other. This work clarifies when to prefer each. Our key findings can be summarized as follows. 

1. We derive closed-form solutions for both \emph{reconstruction}-based (\Cref{prop:closed_form_reconstruction}) and \emph{joint-embedding} (\Cref{prop:closed_form_ssl}) SSL linear models. This enables a precise characterization of data augmentation impacts, analogous to well-known results in supervised learning \cite{bishop1995training}.

2. We then leverage these results to show that optimally aligning the augmentations with the irrelevant components of the input signal can effectively eliminate these components and recover optimal performance for both families of methods (\Cref{theorem:impact_DA_reconstruction,theorem:impact_DA}).
However, in contrast to the supervised learning scenario (\Cref{prop:ols_asymptotic}), simply increasing the sample size cannot overcome any misalignment between the augmentation and the noise (\Cref{theorem:impact_DA,theorem:impact_DA_reconstruction}).

3. By inspecting the alignment requirements for both \emph{reconstruction} and \emph{joint-embedding} methods, we show that in settings with low-magnitude irrelevant noise features, \emph{reconstruction} methods are preferable, as they require fewer tailored augmentations (\Cref{cor:comparison_je_vs_reconstruction}). Conversely, in scenarios with \emph{high-magnitude} irrelevant noise features, \ie, where such features significantly impact the input signal, \emph{joint-embedding} methods should be preferred, as they impose a strictly weaker alignment condition than reconstruction methods (\Cref{cor:comparison_je_vs_reconstruction}). 

4. In \Cref{sec:exps}, we experimentally validate these findings on both vectorial and image data. We demonstrate that \emph{joint-embedding} methods such as DINO \cite{caron2021emerging} and BYOL \cite{grill2017two} are considerably more robust to severe data corruption than \emph{reconstruction}-based methods like MAE \cite{he2021masked} (\Cref{sec:exp_deep_networks}).
In \Cref{sec:exp_image_classification}, we further provide experimental validation for key results from our theoretical analysis. These experiments show that: (i) SSL methods exhibit significantly greater sensitivity to corruptions compared to supervised learning methods (\Cref{sec:ssl_less_robust_supervised,fig:2D_embeddings}); and (ii) SSL performance in noisy data settings is enhanced by aligning augmentations with the underlying noise (\Cref{sec:analysis_noise_injection,fig:2D_embeddings}).

\section{Background}
\label{sec:relatedwork}

Throughout, we consider $n$ samples $\mathbf{X} = (\mathbf{x}_1, \dots, \mathbf{x}_n)^\top \in \mathbb{R}^{n \times d}$ and associated labels $\mathbf{Y} = (\mathbf{y}_1, \dots, \mathbf{y}_n)^\top \in \mathbb{R}^{n \times \ell}$.
We consider a data augmentation distribution $\mathcal{T}$ defined as a distribution over transformations $\tau: \mathbb{R}^d \rightarrow \mathbb{R}^d$.

\textbf{Supervised Learning.}
For the regression task of predicting labels $\mY$ from observations $\mX$, the \emph{augmented empirical risk minimization} problem is as follows:
\begin{equation}\tag{SL}
\begin{aligned}\label{eq:supervised_da}
    \min_{\mV} \: \frac{1}{n} \sum_{i \in \integ{n}} \mathbb{E}_{\tau \sim \mathcal{T}} \left[ \|\vy_i - f_\mV (\tau(\vx_i)) \|_2^2 \right] \:.
\end{aligned}
\end{equation} 
Interestingly, when using a linear model $f_{\mV}: \vx \mapsto \mV \vx$ with $\mV \in \mathbb{R}^{\ell \times d}$, the effect of data augmentation in \Cref{eq:supervised_da} can be explicitly characterized as a Tikhonov regularization problem as shown in \Cref{lemma:da_ridge} \cite{balestriero2022data,lin2024good,bishop1995training} which proof is provided in \Cref{sec:da_ridge}:
\begin{align}
    \frac{1}{n} \sum_{i \in \integ{n}} \mathbb{E}_{\tau \sim \mathcal{T}} \left[ \|\vy_i - \mV\tau(\vx_i) \|_2^2 \right] = \| \mV \|^2_{\bm{\Sigma}} + \frac{1}{n} \sum_{i \in \integ{n}} \|\vy_i - \mV \mathbb{E}_{\tau \sim \mathcal{T}} \left[ \tau(\vx_i) \right] \|_2^2
\end{align}
where $\|\mV \|^2_{\bm{\Sigma}} = \tr(\mV \bm{\Sigma} \mV^\top)$ and 
\begin{align}\tag{Cov}
\label{eq:cov}
    \bm{\Sigma} \coloneqq \frac{1}{n} \sum_{i} \: \mathbb{E}_{\tau \sim \mathcal{T}} \left[ \tau(\vx_i) \tau(\vx_i)^\top\right] - \mathbb{E}_{\tau \sim \mathcal{T}} \left[\tau(\vx_i) \right] \mathbb{E}_{\tau \sim \mathcal{T}} \left[\tau(\vx_i) \right]^\top
\end{align}
denotes the covariance of the augmented samples.
Therefore, the effect of data augmentation within supervised learning using linear models is well understood from a theoretical standpoint. 

\textbf{Lack of Foundations in Self-Supervised Learning.}
Similar results are lacking for SSL, where the explicit effect of data augmentation for linear models has not been rigorously studied. Despite recent efforts to elucidate the underlying principles \cite{wang2020understanding,tian2020makes,jing2021understanding,huang2021towards,haochen2021provable,dubois2022improving,zhang2021understanding,wang2021understanding}, these methods remain only superficially understood \cite{reizinger2025empirically}. A robust statistical framework is still lacking to fully comprehend SSL methods and to position them relative to their supervised learning counterparts \cite{bach2024learning}.
Key open questions involve precisely characterizing the role of data augmentation in shaping final representations within both \emph{reconstruction} and \emph{joint-embedding} frameworks. This work aims to lay the foundation for filling this gap.

\section{Augmentation-Aware Closed-Form Solutions}\label{sec:augmentation_aware_ssl}

In this section, we derive closed-form solutions for the two main families of SSL methods: \emph{reconstruction-based} and \emph{joint-embedding} approaches.
To the best of our knowledge, the following results are the first instances of closed-form solutions for SSL that are directly parameterized by the data augmentation structure. This stands in contrast to previous solutions highlighted in \cite{balestriero2022contrastive}, which focused on the dependency graph between augmented samples and were unaware of augmentations.
These results will then allow us to analyze how augmentations affect the learned representations in both families of SSL methods.

In line with previous theoretical works focused on analytical tractability \cite{cabannes2023ssl, balestriero2022contrastive, littwin2024jepa,simon2023stepwise,tian2021understanding}, we study models that are linear in their parameters. 
Note that these models can produce arbitrary nonlinear predictions via appropriate feature maps~\cite{bach2024learning} and are known to describe regimes of wide neural networks~\cite{lee2019wide}.

\subsection{Reconstruction-Based Self-Supervised Learning}\label{sec:reconstruction_ssl}

We first consider \emph{reconstruction}-based SSL models. The problem can be framed as follows, where $\mathcal{T}$ is the data augmentation distribution:
\begin{align}\tag{SSL-RC}
    \min_{\mE, \mD} \quad & \frac{1}{n} \sum_{i \in \integ{n}} \mathbb{E}_{\tau \sim \mathcal{T}} \left[ \| \vx_i - f_\mD(f_\mE(\tau(\vx_i))) \|_2^2 \right] \label{eq:reconstruction_problem} \:.
\end{align}
In this formulation, each data sample is augmented and then passes through an encoder $f_{\mE}$, followed by a decoder $f_{\mD}$. The objective is to minimize the reconstruction error between the original sample and the reconstructed one. 
This methodology is analogous to the \emph{Denoising Auto-Encoder} \cite{vincent2008extracting}, \emph{Masked Auto-Encoder} \cite{he2022masked} and similar frameworks.
Interestingly, the \emph{reconstruction} problem can be solved in closed form when considering linear models for both encoder and decoder. All proofs can be found in \Cref{sec:proofs}.

\begin{restatable}{theorem}{propClosedFormReconstruction}\label{prop:closed_form_reconstruction}
    Let $\overline{\vx}_i \coloneqq \mathbb{E}_{\tau \sim \mathcal{T}}\big[\tau(\vx_i)\big]$ for each $i\in\integ{n}$, and define $\overline{\mX} \coloneqq (\overline{\vx}_1, \dots, \overline{\vx}_n)^\top$.
    Assume that $\tfrac1n 
    \overline{\mX}^\top \overline{\mX} + \bm{\Sigma}$ is positive definite where $\bm{\Sigma}$ is defined in \Cref{eq:cov}. Consider the singular value decomposition:
    \begin{align}
        \tfrac1n 
        \mX^\top \overline{\mX} \left(\tfrac1n 
        \overline{\mX}^\top \overline{\mX} + \bm{\Sigma} \right)^{-\tfrac12} = \mR \mPhi \mP^\top
    \end{align}
    where $\mR \in \R^{d \times d}$ and $\mP \in \R^{d \times d}$ are orthogonal and $\mPhi \coloneqq \mathrm{diag}(\phi_1, \dots, \phi_d)$ with $\phi_1 \geq \dots \geq \phi_d \geq 0$. Solutions of \Cref{eq:reconstruction_problem} for $f_{\mE}: \vx \mapsto \mE \vx$ and $f_{\mD}: \vx \mapsto \mD \vx$ take the form:
    \begin{align}
        \mE^\star = \mT \mP_k^\top \left(\tfrac1n 
        \overline{\mX}^\top \overline{\mX} + \bm{\Sigma} \right)^{-\tfrac12} \quad {\text{and}} \quad \mD^\star = \mR_k \mPhi_k \mT^{-1} \:,
    \end{align}
    where $\mT$ is any invertible matrix in $\R^{k \times k}$, $\mP_k$ and $\mR_k$ are the first $k$ columns of $\mP$ and $\mR$ respectively, and $\mPhi_k = \mathrm{diag}(\phi_1, \dots, \phi_k)$.
\end{restatable}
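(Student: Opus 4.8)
The plan is to reduce the problem to a classical low-rank matrix approximation (Eckart--Young) after a change of variables that symmetrizes the objective. First I would expand the objective of \Cref{eq:reconstruction_problem}. Using \Cref{lemma:da_ridge} with $\mY = \mX$ and $\mV = \mD\mE$, the augmented reconstruction risk becomes
\begin{align}
    \frac1n \sum_{i} \Ebb_{\tau}\left[\|\vx_i - \mD\mE\tau(\vx_i)\|_2^2\right] = \|\mD\mE\|_{\bm{\Sigma}}^2 + \frac1n\sum_i \|\vx_i - \mD\mE\,\Ebb_\tau[\tau(\vx_i)]\|_2^2 \:.
\end{align}
After further assuming (or noting the paper's convention is) that augmentations are mean-preserving, $\Ebb_\tau[\tau(\vx_i)] = \vx_i$, this is $\tr\bigl(\mD\mE\bm{\Sigma}\mE^\top\mD^\top\bigr) + \frac1n\tr\bigl((\mathbf{I}-\mD\mE)\mX^\top\mX(\mathbf{I}-\mD\mE)^\top\bigr)$ — i.e.\ a pure function of the product $\mW \coloneqq \mD\mE$ of rank at most $k$. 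I would combine the two traces: expanding, the objective equals $\tr\bigl(\mW(\tfrac1n\mX^\top\mX + \bm{\Sigma})\mW^\top\bigr) - \tfrac2n\tr\bigl(\mW\mX^\top\mX\bigr) + \text{const}$.

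Next I would complete the square in $\mW$. Writing $\mathbf{M} \coloneqq \tfrac1n\mX^\top\mX + \bm{\Sigma}$, which is positive definite by hypothesis so $\mathbf{M}^{1/2}$ and $\mathbf{M}^{-1/2}$ exist, and setting $\widetilde{\mW} \coloneqq \mW\mathbf{M}^{1/2}$, the objective becomes $\|\widetilde{\mW} - \tfrac1n\mX^\top\mX\,\mathbf{M}^{-1/2}\|_F^2$ up to an additive constant. Now $\widetilde{\mW}$ ranges over all matrices of rank at most $k$ (since $\mathbf{M}^{1/2}$ is invertible, $\mathrm{rank}\,\widetilde{\mW} = \mathrm{rank}\,\mW \le k$), so minimizing is exactly the problem of best rank-$k$ approximation of the matrix $\tfrac1n\mX^\top\mX\,\mathbf{M}^{-1/2} = \mR\mPhi\mP^\top$. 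By Eckart--Young--Mirsky, the minimizer is $\widetilde{\mW}^\star = \mR_k\mPhi_k\mP_k^\top$ (taking the top $k$ singular triples; this is where the ordering $\phi_1 \ge \dots \ge \phi_d$ and the truncations $\mR_k,\mP_k,\mPhi_k$ enter). Unwinding, $\mW^\star = \mR_k\mPhi_k\mP_k^\top\mathbf{M}^{-1/2}$.

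Finally I would recover the factorization into $\mE^\star$ and $\mD^\star$. Any factorization $\mW^\star = \mD^\star\mE^\star$ with the stated inner dimension is a valid solution, and conversely, since the objective depends on $\mE,\mD$ only through $\mW = \mD\mE$, every optimal pair factors the optimal $\mW^\star$; hence the solution set is $\{(\mD\mT^{-1}\cdot \mT\mE) : \mD\mE = \mW^\star\}$, which is parameterized by an arbitrary invertible $\mT \in \R^{k\times k}$ acting as $\mE^\star = \mT\mP_k^\top\mathbf{M}^{-1/2}$, $\mD^\star = \mR_k\mPhi_k\mT^{-1}$, matching the claim (the canonical choice $\mT = \mathbf{I}$ gives one representative, and $\mT$ absorbs the gauge freedom of the factorization). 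I would double-check that $\mD^\star\mE^\star = \mR_k\mPhi_k\mP_k^\top\mathbf{M}^{-1/2} = \mW^\star$, which is immediate.

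The main obstacle I anticipate is the bookkeeping around the mean-of-augmentations term: the clean reduction above uses $\Ebb_\tau[\tau(\vx_i)] = \vx_i$, and if the paper does not assume this, the cross term $\tfrac1n\sum_i\|\vx_i - \mW\,\Ebb_\tau[\tau(\vx_i)]\|_2^2$ must be handled by absorbing $\Ebb_\tau[\tau(\vx_i)]$ into a modified data matrix (and $\bm{\Sigma}$ already accounts for the centering), so one has to be careful that the matrix being approximated is exactly $\tfrac1n\mX^\top\mX\,\mathbf{M}^{-1/2}$ and not something involving the augmented means; the SVD in the statement strongly suggests the mean-preserving convention is in force. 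A secondary subtlety is the non-uniqueness in Eckart--Young when $\phi_k = \phi_{k+1}$ (degenerate singular values), but since the statement only claims ``solutions take the form,'' exhibiting this family suffices.
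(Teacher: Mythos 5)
Your proposal is correct and follows essentially the same route as the paper: invoke \Cref{lemma:da_ridge}, reduce to a problem in the product $\mM = \mD\mE$ with a rank constraint, complete the square via the change of variable $\mM' = \mM(\tfrac1n\mX^\top\mX+\bm{\Sigma})^{1/2}$, apply Eckart--Young--Mirsky, and unwind the factorization with the invertible gauge matrix $\mT$. You are also right to flag the mean-preserving convention $\mathbb{E}_\tau[\tau(\vx_i)] = \vx_i$: the paper's proof silently invokes it when passing from \Cref{lemma:da_ridge} to $\tfrac1n\sum_i\|\vx_i - \mD\mE\vx_i\|_2^2 + \|\mD\mE\|_{\bm{\Sigma}}^2$, and it is indeed consistent with the zero-mean Gaussian augmentations used in \Cref{sec:setup}.
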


\subsection{Joint-Embedding-Based Self-Supervised Learning}
\label{sec:je_ssl}

We now consider a \emph{joint-embedding} SSL problem formulated as follows, where $f_{\mW}$ is the SSL model and $\mathcal{T}$ is the data augmentation distribution:
\begin{equation}
    \tag{SSL-JE} \label{eq:ssl}
    \begin{aligned}
        \min_{\mathbf{W}} \quad & \frac{1}{n} \sum_{i \in \integ{n}} \mathbb{E}_{\tau_1, \tau_2 \sim \mathcal{T}} \left[ \|  f_\mW(\tau_1(\vx_i)) - f_\mW(\tau_2(\vx_i)) \|^2_2 \right] \:, \\
        \text{subject to} \quad & \frac{1}{n} \sum_{i \in \integ{n}} \mathbb{E}_{\tau \sim \mathcal{T}} \left[ f_\mW(\tau(\vx_i)) f_\mW(\tau(\vx_i))^\top\right] = \mathbf{I}_k \:.
\end{aligned}
\end{equation}
In the above \Cref{eq:ssl}, the objective represents the usual invariance term which ensures consistency between two augmented views of the same sample and is a common component of \emph{joint-embedding} methods. The constraint enforces orthonormality in the learned representations, promoting diversity in the representation space \cite{wang2020understanding} and thus preventing collapse. Most \emph{joint-embedding} models incorporate a similar repulsion term, either explicitly within the loss function, such as in Barlow-Twins \cite{zbontar2021barlow}, SimCLR \cite{chen2020simple}, VICReg \cite{bardes2021vicreg}, and MoCo \cite{he2020momentum}, or implicitly through architectural design choices, as demonstrated by BYOL \cite{grill2020bootstrap} and DINO \cite{caron2021emerging}. In our case, we rely on the sum of the outer products of the representation vectors. This approach closely resembles VICReg \cite{bardes2021vicreg}, specifically its covariance regularization term. Interestingly, under the unifying formalism presented in \cite{garrido2022duality}, most popular \emph{joint-embedding} methods can be framed with this simple repulsive term.

The problem of \Cref{eq:ssl} can also be solved in closed form when considering a linear SSL model, as formalized below.

\begin{restatable}{theorem}{propClosedFormSSL}\label{prop:closed_form_ssl}
Let $\mS \coloneqq \frac{1}{n} \sum_{i} \mathbb{E}_{\tau \sim \mathcal{T}} \left[ \tau(\vx_i) \tau(\vx_i)^\top\right]$, $\mG \coloneqq \frac{1}{n} \sum_{i} \mathbb{E}_{\tau \sim \mathcal{T}} \left[ \tau(\vx_i)\right] \mathbb{E}_{\tau \sim \mathcal{T}} \left[ \tau(\vx_i)\right]^\top$.
Assume that $\mS$ is positive definite.
Consider the eigendecomposition:
\begin{align}
    \mS^{-\tfrac12} \mG \mS^{-\tfrac12} = \mQ \bm{\Omega} \mQ^\top
\end{align}
where $\bm{\Omega} = \mathrm{diag}(\omega_1, \dots, \omega_d)$ with $\omega_1 \geq \dots \geq \omega_d$.
Solutions of \Cref{eq:ssl} for a linear model $f_{\W}: \vx \mapsto \mW \vx$ take the form:
\begin{align}
\mW^\star = \mU \mQ_k^\top \mS^{-\tfrac12},
\end{align}
where $\mQ_k = (\vq_1, \dots, \vq_k)$ and $\mU$ is any orthogonal matrix of size $k \times k$.
\end{restatable}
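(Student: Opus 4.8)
The plan is to turn \Cref{eq:ssl} into a classical trace‑maximization problem over a Stiefel manifold and then apply Ky Fan's variational principle.

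\emph{Step 1 — expand the invariance term.} Since $\tau_1$ and $\tau_2$ are drawn independently from $\mathcal{T}$, the cross term in $\|\mW\tau_1(\vx_i)-\mW\tau_2(\vx_i)\|_2^2$ factorizes as $\mathbb{E}_{\tau_1}[\mW\tau_1(\vx_i)]^\top\mathbb{E}_{\tau_2}[\mW\tau_2(\vx_i)]$, so that
\[
\mathbb{E}_{\tau_1,\tau_2}\!\left[\|\mW\tau_1(\vx_i)-\mW\tau_2(\vx_i)\|_2^2\right] = 2\,\mathbb{E}_{\tau}\!\left[\|\mW\tau(\vx_i)\|_2^2\right] - 2\,\bigl\|\mW\,\mathbb{E}_{\tau}[\tau(\vx_i)]\bigr\|_2^2 .
\]
Averaging over $i\in\integ{n}$ and rewriting each squared norm as a trace, the objective becomes $2\tr\!\bigl(\mW(\mS-\mG)\mW^\top\bigr)$, and the constraint reads exactly $\mW\mS\mW^\top=\mathbf{I}_k$. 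As the constraint forces $\tr(\mW\mS\mW^\top)=k$, minimizing the objective is equivalent to maximizing $\tr(\mW\mG\mW^\top)$ subject to $\mW\mS\mW^\top=\mathbf{I}_k$.

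\emph{Step 2 — whiten.} Because $\mS\succ0$, the substitution $\mathbf{A}\coloneqq\mW\mS^{1/2}$ (with inverse $\mW=\mathbf{A}\mS^{-1/2}$) is a bijection on $\R^{k\times d}$. The constraint becomes $\mathbf{A}\mathbf{A}^\top=\mathbf{I}_k$ (orthonormal rows), and the objective becomes $\tr\!\bigl(\mathbf{A}\,\mS^{-1/2}\mG\mS^{-1/2}\mathbf{A}^\top\bigr)=\tr\!\bigl(\mathbf{A}\mQ\bm{\Omega}\mQ^\top\mathbf{A}^\top\bigr)$, using the eigendecomposition from the statement and noting that $\mS^{-1/2}\mG\mS^{-1/2}$ is symmetric positive semidefinite since $\mG$ is a sum of outer products.

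\emph{Step 3 — Ky Fan.} It remains to solve $\max\{\tr(\mathbf{A}\mQ\bm{\Omega}\mQ^\top\mathbf{A}^\top):\mathbf{A}\mathbf{A}^\top=\mathbf{I}_k\}$. Setting $\mathbf{B}\coloneqq\mathbf{A}\mQ$, which still satisfies $\mathbf{B}\mathbf{B}^\top=\mathbf{I}_k$, the objective equals $\sum_{j=1}^d\omega_j c_j$ with $c_j\coloneqq(\mathbf{B}^\top\mathbf{B})_{jj}=\|\mathbf{B}_{:,j}\|_2^2$. Since $\mathbf{B}^\top\mathbf{B}$ is an orthogonal projection of rank $k$, we have $c_j\in[0,1]$ and $\sum_j c_j=\tr(\mathbf{B}\mathbf{B}^\top)=k$; because $\omega_1\ge\cdots\ge\omega_d$, the sum is maximized (value $\sum_{j=1}^k\omega_j$) exactly when $c_1=\cdots=c_k=1$, which forces $\mathbf{B}$ to vanish outside its first $k$ columns with an orthogonal leading $k\times k$ block $\mU$, i.e. $\mathbf{A}^\star=\mU\mQ_k^\top$. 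Unwhitening yields $\mW^\star=\mU\mQ_k^\top\mS^{-1/2}$, and a minimizer exists because the feasible set $\{\mathbf{A}\mathbf{A}^\top=\mathbf{I}_k\}$ is compact.

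The trace manipulations in Steps 1--2 are routine; the only point that needs care is the characterization of \emph{all} minimizers in Step 3. When $\omega_k=\omega_{k+1}$ the top‑$k$ eigenspace of $\mS^{-1/2}\mG\mS^{-1/2}$ is not unique, so $\mQ_k$ must be understood as an arbitrary orthonormal basis of some maximal top‑$k$ eigenspace; and the leftover orthogonal factor $\mU$ is genuinely free because both the objective and the constraint of \Cref{eq:ssl} are invariant under $\mW\mapsto\mU\mW$.
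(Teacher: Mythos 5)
Your proof is correct, but it takes a genuinely different route from the paper's. The paper proceeds via a Lagrangian/KKT argument: it introduces a Lagrange multiplier matrix $\mathbf{\Lambda}$ for the constraint $\mW\mS\mW^\top=\mathbf{I}_k$, \emph{assumes} $\mathbf{\Lambda}$ is diagonal, and obtains the generalized eigenvalue problem $\mW^\star\mG=\mathbf{\Lambda}\mW^\star\mS$, which is then converted to the symmetric eigenproblem on $\mS^{-1/2}\mG\mS^{-1/2}$ and paired with primal feasibility to get orthogonality of $\mU$. You instead whiten with $\mathbf{A}=\mW\mS^{1/2}$ and apply a direct Ky Fan trace-maximization argument: writing the objective as $\sum_j\omega_j c_j$ with the diagonals $c_j$ of the rank-$k$ projector $\mathbf{B}^\top\mathbf{B}$, constrained to $c_j\in[0,1]$, $\sum_j c_j=k$. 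This has two advantages over the paper's route: it is a \emph{global} argument (so it characterizes all maximizers and proves optimality, rather than only producing stationary points), and it does not require the unjustified assumption that the Lagrange multiplier matrix is diagonal — for a symmetric matrix constraint it is symmetric, and one would still need to diagonalize it and absorb the resulting rotation into $\mU$, a step the paper glosses over. Your remark on the non-uniqueness of $\mQ_k$ when $\omega_k=\omega_{k+1}$ is a careful point the paper does not make explicit. One tiny addition that would close the argument in Step 3: from $c_j=0$ for $j>k$ and $\mathbf{B}^\top\mathbf{B}\succeq0$, the entire $j$-th column of $\mathbf{B}$ must vanish (a PSD matrix with a zero diagonal entry has a zero row/column), which is exactly what forces $\mathbf{B}=(\mU\mid\mathbf{0})$.
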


\section{Augmentation Alignment Requirements in Self-Supervised Learning}
\label{sec:theory}

\begin{figure*}[t!]
    \centering
\includegraphics[width=\linewidth]{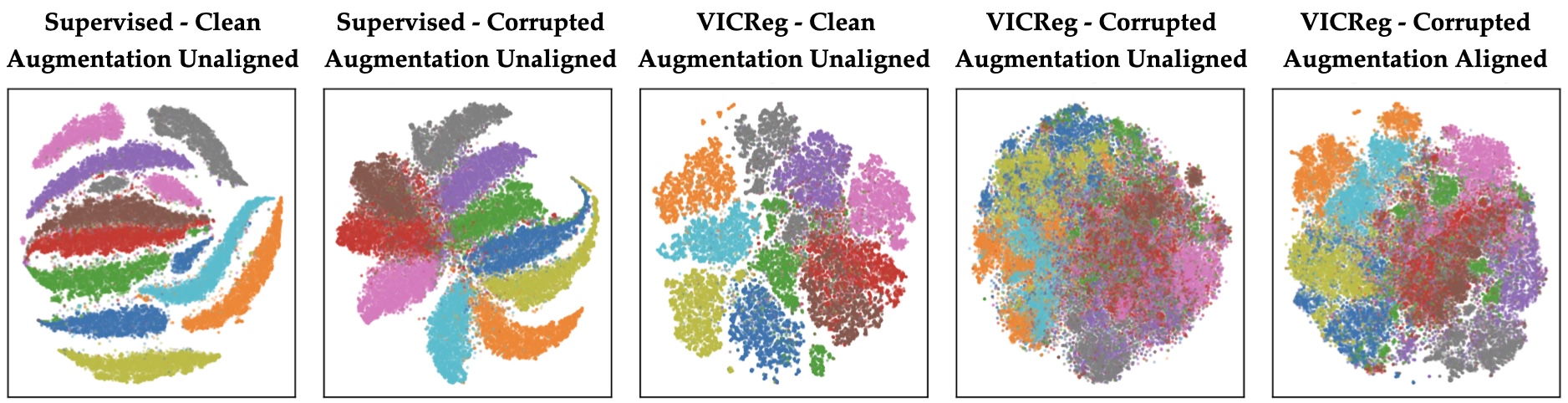}
\caption[Noise-aligned augmentation improves SSL]{Injecting corruption-aligned noise into data augmentation improves SSL representation quality on corrupted CIFAR-10.
Thus \textbf{aligning augmentations with the irrelevant components in the data is crucial in SSL}.
t-SNE \cite{Van_Assel_TorchDR} visualizations of (left to right):
(1) Supervised features (penultimate layer), clean data.
(2) Supervised features (penultimate layer), fog-corrupted (severity 5).
(3) VICReg representations, clean data.
(4) VICReg representations, fog-corrupted (severity 5).
(5) VICReg representations, fog-corrupted (severity 5) with fog noise (severity 1) injection during augmentation.
Unlike supervised features, VICReg representations degrade significantly under corruption (compare 3 and 4). Injecting noise in the data augmentation (5) enhances class separability.
}
\label{fig:2D_embeddings}
\end{figure*}


In this section, we build on the results of \Cref{sec:augmentation_aware_ssl} to evaluate the ability of both families of SSL models to reach optimal performance. To define such notion of optimality, we model our data as having $k$ \emph{important signal components} and $d-k$ \emph{pure noise components}. These \emph{noise components} represent the variations that SSL methods are typically designed to be invariant to, \eg background noise for image classification tasks. An ideal SSL encoder would retain the $k$ \emph{informative important dimensions} while discarding the $d-k$ \emph{noise components}. 

We formalize this scenario in \Cref{sec:setup}, where a parameter $\alpha$ is introduced to control the alignment between the irrelevant features and the augmentations.  
In \Cref{sec:supervised_theory}, we demonstrate that supervised learning models can effectively achieve optimal performance either when augmentations are well aligned with the irrelevant noise features at finite sample sizes or when the sample size is large, regardless of the augmentation employed. 
In \Cref{sec:ssl_theory}, we show that, unlike supervised learning, SSL necessitates a sufficiently good alignment to achieve optimal performance, even in the infinite sample limit.
Finally in \Cref{sec:comparison_ssl_alpha}, we compare the alignment requirements of \emph{joint-embedding} and \emph{reconstruction}-based SSL methods, thus providing insights into the characteristics of both families of methods.

\subsection{Data, Noise and Augmentation}\label{sec:setup}

We consider an input dataset with two parts: \emph{important features} and \emph{irrelevant noise}.
Optimal performance on downstream tasks is achieved when using only the \emph{important features}. 
Let $\mX = \mL \mK \mQ^\top$ be the singular value decomposition of the \emph{important features}, where 
$\mK = \mathrm{diag}(\kappa_{1},\dots,\kappa_{d})$ is the diagonal matrix of singular values.
Each sample $\mathbf{x}_i$ is corrupted by additive Gaussian noise constituting the \emph{irrelevant} features:
\begin{align}
\forall \: i \in \integ{n}, \quad \widetilde{\mathbf{x}}_i = \mathbf{x}_i + \bm{\gamma}_i, \quad \bm{\gamma}_i \sim \mathcal{N}(\mathbf{0}, \bm{\Gamma}),
\end{align}
with $\bm{\gamma}_i$ drawn independently across $i \in \integ{n}$ and where $\bm{\Gamma} \in \mathbb{R}^{d \times d}$ is positive semi-definite. 
For simplicity, we assume that $\bm{\Gamma}$ is diagonalized by the same orthonormal matrix $\mathbf{Q}$ from the SVD above \ie $\bm{\Gamma} = \mQ\bm{\Lambda}_{\bm{\Gamma}}\mQ^\top$ where $\bm{\Lambda}_{\bm{\Gamma}} = \mathrm{diag}(\lambda^{\bm{\Gamma}}_{1}, \dots, \lambda^{\bm{\Gamma}}_{d})$. The matrix $\widetilde{\mX} = (\tilde{\vx}_1, \dots, \tilde{\vx}_n)^\top \in \R^{n \times d}$ then forms the \emph{corrupted} input data.
We assume that the \emph{important features} are concentrated in exactly $k \geq 1$ components, meaning $\kappa_i > 0$ for $i \in \integ{k}$ and $\kappa_i = 0$ for $i > k$. These components are referred to as the \emph{important components}. Additionally, we assume that the \emph{irrelevant noise} are null in these $k$ components, i.e.\ $\lambda^{\bm{\Gamma}}_{i} = 0$ for all $i \in \integ{k}$, and strictly positive otherwise, i.e.\ $\lambda^{\bm{\Gamma}}_{i} > 0$ for $i \in \integ{k+1:d}$. We refer to the $\integ{k+1:d}$ components as the \emph{noise components}.

\textbf{Data augmentation.}
Let $\bm{\Theta} \in \mathbb{R}^{d \times d}$ be positive semi-definite and diagonalized by $\mQ$ \ie $\bm{\Theta} = \mQ\bm{\Lambda}_{\bm{\Theta}}\mQ^\top$ where $\bm{\Lambda}_{\bm{\Theta}} = \mathrm{diag}(\lambda^{\bm{\Theta}}_{1}, \dots, \lambda^{\bm{\Theta}}_{d})$.
We consider the augmentation distribution,
\begin{align}
\forall \: \alpha \geq 0, \: \: \mathcal{T}(\alpha) \coloneqq
\Bigl\{
\tau : \mathbb{R}^d \to \mathbb{R}^d \: \Big| \:
\tau(\vx) = \vx + \bm{\theta} + \alpha\,\bm{\gamma},\;
\bm{\theta} \sim \mathcal{N}(\mathbf{0}, \bm{\Theta}),\;
\bm{\gamma} \sim \mathcal{N}(\mathbf{0}, \bm{\Gamma}) 
\Bigr\},
\end{align}
where $\bm{\theta}$ and $\bm{\gamma}$ are drawn independently for each transformation. Note that the term $\bm{\gamma}$ follows the same distribution as the noisy irrelevant features. Increasing the magnitude of $\alpha$ thus aligns the data augmentation with these irrelevant features.

\begin{remark}
One can extend our result to augmentations beyond Gaussians by considering any augmentation of \emph{covariance} (defined in \ref{eq:cov}) $\bm{\Theta} + \alpha^2 \bm{\Gamma}$ \cite{lin2024good}.
\end{remark}

\subsection{Supervised Learning Consistency Regardless of Augmentations}\label{sec:supervised_theory}

We first analyze the behavior of supervised learning models by identifying regimes in which the supervised model effectively disregards the \emph{noisy irrelevant features in $\widetilde{\mX}$}. This provides the foundation for pinpointing key differences between supervised and SSL models in \Cref{sec:ssl_theory}. We rely on the data augmentation $\mathcal{T}(\alpha)$, where $\alpha \in \mathbb{R}_+$ controls the alignment between the data corruption and the augmentation process as presented in \Cref{sec:setup}. 
\begin{restatable}{proposition}{propOLSasymptotic}\label{prop:ols_asymptotic}[Supervised Learning]
Let \(\mV^\star\) (resp. \(\widetilde{\mV}^\star\)) be the linear model solving \Cref{eq:supervised_da} with augmentation $\mathcal{T}(\alpha)$ for \(\mX\) (resp. the corrupted \(\widetilde{\mX}\)).
The limit:
\begin{align}
    \widetilde{\mV}^\star \: &\xrightarrow[\text{a.s.}]{} \: \mV^\star
\end{align}
holds almost surely in either of the following regimes:
\begin{itemize}
    \item as $\alpha \to +\infty$ (perfect augmentation-noise alignment) for any fixed sample size $n \in \mathbb{N}$.
    \item as $n \to +\infty$ (infinite samples) for any fixed alignment $\alpha \geq 0$.
\end{itemize}
\end{restatable}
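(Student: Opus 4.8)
The plan is to reduce \Cref{eq:supervised_da} to ridge regression through \Cref{lemma:da_ridge}, obtain explicit formulas for $\mV^\star$ and $\widetilde{\mV}^\star$, and then read both limits off the block structure that the noise/augmentation model imposes in the SVD basis $\mQ$. First, for $\tau\in\mathcal{T}(\alpha)$ one has $\mathbb{E}_{\tau}[\tau(\vx_i)]=\vx_i$ (the perturbations $\bm{\theta},\bm{\gamma}$ are centered), and by \Cref{eq:cov} the augmented covariance is $\bm{\Sigma}=\bm{\Theta}+\alpha^2\bm{\Gamma}$. \Cref{lemma:da_ridge} then turns \Cref{eq:supervised_da} into the strictly convex problem $\min_{\mV}\tr(\mV\bm{\Sigma}\mV^\top)+\tfrac1n\|\mY-\mX\mV^\top\|_F^2$, whose unique minimizer is $\mV^\star=\tfrac1n\mY^\top\mX(\tfrac1n\mX^\top\mX+\bm{\Sigma})^{-1}$; writing the corrupted data as $\widetilde{\mX}=\mX+\mathbf{N}$ with $\mathbf{N}=(\bm{\gamma}_1,\dots,\bm{\gamma}_n)^\top$, the same computation yields $\widetilde{\mV}^\star=\tfrac1n\mY^\top\widetilde{\mX}(\tfrac1n\widetilde{\mX}^\top\widetilde{\mX}+\bm{\Sigma})^{-1}$ (I will take the inner matrices positive definite, as in the hypothesis of \Cref{prop:closed_form_reconstruction}, so these are genuine solutions).

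Next I would pass to the $\mQ$-basis and split coordinates into the $k$ important ones and the $d-k$ noise ones. Since $\bm{\Theta},\bm{\Gamma}$ are $\mQ$-diagonal and $\kappa_i,\lambda^{\bm{\Gamma}}_i$ vanish outside their blocks, this gives $\mX\mQ=[\mX_1,\ \mathbf{0}]$, $\mathbf{N}\mQ=[\mathbf{0},\ \mathbf{N}_2]$ with $\mathbf{N}_2$ having i.i.d.\ rows $\mathcal{N}(\mathbf{0},\bm{\Lambda}^{(2)}_{\bm{\Gamma}})$, and $\tfrac1n\mX^\top\mX+\bm{\Sigma}$ block-diagonal with blocks $\mathbf{M}_1\coloneqq\tfrac1n\mX_1^\top\mX_1+\bm{\Lambda}^{(1)}_{\bm{\Theta}}$ and $\mathbf{M}_2\coloneqq\bm{\Lambda}^{(2)}_{\bm{\Theta}}+\alpha^2\bm{\Lambda}^{(2)}_{\bm{\Gamma}}$. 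As $\mY^\top\mX\mQ=[\mY^\top\mX_1,\ \mathbf{0}]$, one reads off $\mV^\star\mQ=[\tfrac1n\mY^\top\mX_1\mathbf{M}_1^{-1},\ \mathbf{0}]$, which is \emph{independent of $\alpha$ and of the corruption $\mathbf{N}$} (using $\bm{\Lambda}^{(1)}_{\bm{\Gamma}}=\mathbf{0}$). Similarly $\widetilde{\mV}^\star\mQ=[\tfrac1n\mY^\top\mX_1,\ \tfrac1n\mY^\top\mathbf{N}_2]\,\widetilde{\mathbf{M}}^{-1}$, where $\widetilde{\mathbf{M}}$ has diagonal blocks $\mathbf{M}_1$ and $\mathbf{M}_2+\tfrac1n\mathbf{N}_2^\top\mathbf{N}_2$ and off-diagonal block $\tfrac1n\mX_1^\top\mathbf{N}_2$. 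The proposition then amounts to showing that the extra ``noise-block'' content of $\widetilde{\mV}^\star$ disappears while its important block is (asymptotically) unchanged.

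For the first regime ($\alpha\to\infty$, $n$ fixed), $\mathbf{M}_2\succeq\alpha^2\bm{\Lambda}^{(2)}_{\bm{\Gamma}}$ with $\bm{\Lambda}^{(2)}_{\bm{\Gamma}}\succ 0$, so the bottom-right block of $\widetilde{\mathbf{M}}$ diverges while $\mathbf{M}_1$ and $\tfrac1n\mX_1^\top\mathbf{N}_2$ stay fixed; the Schur-complement / block-inverse formula then gives $\widetilde{\mathbf{M}}^{-1}\to\mathrm{diag}(\mathbf{M}_1^{-1},\mathbf{0})$. Since $\tfrac1n\mY^\top\widetilde{\mX}$ does not depend on $\alpha$, this yields $\widetilde{\mV}^\star\mQ\to[\tfrac1n\mY^\top\mX_1\mathbf{M}_1^{-1},\ \mathbf{0}]=\mV^\star\mQ$. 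This convergence is deterministic for each realization of $\mathbf{N}$, hence holds a.s.

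For the second regime ($n\to\infty$, $\alpha$ fixed) I would invoke a strong law of large numbers for independent summands (valid under a mild growth condition on the rows of $\mX$ and $\mY$, \eg\ boundedness, so Kolmogorov's criterion covers the non-identically-distributed cross term): almost surely $\tfrac1n\mathbf{N}_2^\top\mathbf{N}_2\to\bm{\Lambda}^{(2)}_{\bm{\Gamma}}$, $\tfrac1n\mX_1^\top\mathbf{N}_2\to\mathbf{0}$, and $\tfrac1n\mY^\top\mathbf{N}_2\to\mathbf{0}$. Hence the off-diagonal block of $\widetilde{\mathbf{M}}$ vanishes, and since $\mathbf{M}_1$ and $\mathbf{M}_2+\bm{\Lambda}^{(2)}_{\bm{\Gamma}}$ stay uniformly positive definite (from the regularizer together with nondegeneracy of the important-feature covariance), continuity of the inverse gives $\widetilde{\mathbf{M}}^{-1}-\mathrm{diag}\big(\mathbf{M}_1^{-1},(\mathbf{M}_2+\bm{\Lambda}^{(2)}_{\bm{\Gamma}})^{-1}\big)\to\mathbf{0}$. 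Substituting, the noise block of $\widetilde{\mV}^\star\mQ$ vanishes (because $\tfrac1n\mY^\top\mathbf{N}_2\to\mathbf{0}$) and its important block converges to $\tfrac1n\mY^\top\mX_1\mathbf{M}_1^{-1}$, \ie\ to the important block of $\mV^\star\mQ$; thus $\widetilde{\mV}^\star-\mV^\star\to\mathbf{0}$ a.s. (note $\mV^\star$ need not converge as $n\to\infty$, only the difference does, which is all that is claimed). The main obstacle is precisely this second regime: one must nail down the regularity needed for the SLLN on $\tfrac1n\mX_1^\top\mathbf{N}_2$ and secure a uniform-in-$n$ lower bound on $\lambda_{\min}(\widetilde{\mathbf{M}})$ so that the matrix inverse is continuous along the sequence; the $\alpha\to\infty$ direction, by contrast, is a routine deterministic perturbation argument.
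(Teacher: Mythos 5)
Your proposal follows the same overall route as the paper: reduce \Cref{eq:supervised_da} to a ridge problem via \Cref{lemma:da_ridge}, compute $\bm{\Sigma}=\bm{\Theta}+\alpha^2\bm{\Gamma}$, pass to the $\mQ$-basis so the important and noise coordinates split, and then read off both limits from the resulting block structure together with the SLLN. The conclusion is correct.

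The one genuine difference is that you keep the cross-coupling block $\tfrac1n\mX_1^\top\mathbf{N}_2$ of $\widetilde{\mathbf{M}}$ explicit throughout and dispose of it via a Schur-complement / continuity-of-inverse argument, whereas the paper's proof silently drops it: after writing the first-order condition the paper asserts that the optimal $\widetilde{\mV}_1^\star$ coincides exactly with the uncorrupted solution and that $\widetilde{\mV}_2^\star$ satisfies $\tfrac1n\widetilde{\mV}_2^\star\mathbf{N}_2^\top\mathbf{N}_2 - \tfrac1n\mY^\top\mathbf{N}_2 + \widetilde{\mV}_2^\star(\bm{\Lambda}_{\bm{\Theta},2}+\alpha^2\bm{\Lambda}_{\bm{\Gamma},2})=\mathbf{0}$, neither of which is exact at finite $n$ since the full stationarity conditions contain a $\tfrac1n\widetilde{\mV}_2\mathbf{N}_2^\top\mX_1$ (resp.\ $\tfrac1n\widetilde{\mV}_1\mX_1^\top\mathbf{N}_2$) term. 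That term vanishes in both asymptotic regimes, so the paper's conclusion is unharmed, but your version makes this explicit rather than implicit. You also correctly flag two points the paper leaves tacit: the SLLN for the mixed term $\tfrac1n\mX_1^\top\mathbf{N}_2$ needs a growth condition on the (deterministic, $n$-indexed) rows of $\mX$, and the claimed convergence should be read as $\widetilde{\mV}^\star-\mV^\star\to\mathbf{0}$ because $\mV^\star$ itself need not stabilize as $n\to\infty$ without further assumptions on the data sequence. In short, same method, with a tighter bookkeeping of the off-diagonal block; this buys a proof that is correct line-by-line at finite $n$, at the modest cost of invoking a block-inverse continuity lemma.
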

The above result shows that, when performing supervised learning with \emph{corrupted} data, the model can achieve the same performance as if it were trained only on the \emph{important features} (thus achieving optimal performance) if either of the following conditions holds: i) The data augmentation process is well aligned with the noise corrupting the inputs ($\alpha$ large). ii) A sufficiently large sample size is available to compensate for any misalignment between the augmentation and the input noise ($n$ large).

\subsection{Self-Supervised Learning Requires Aligned Augmentation and Noise}\label{sec:ssl_theory}

Building on the closed-form expressions for SSL provided in \Cref{prop:closed_form_ssl,prop:closed_form_reconstruction}, we are now interested in studying the ability of SSL models to achieve optimal performance when trained on the \emph{corrupted} dataset $\widetilde{\mX}$, as defined in \Cref{sec:setup}.

\begin{restatable}{proposition}{theoremImpactDAreconstruction}\label{theorem:impact_DA_reconstruction}[Reconstruction]
    Let \(\mE^\star\) (resp. \(\widetilde{\mE}^\star\)) be the linear (encoder) model solving \Cref{eq:reconstruction_problem} for \(\mX\) (resp. the corrupted \(\widetilde{\mX}\)).
    The limit:
    \begin{align}
        \widetilde{\mE}^\star \: \xrightarrow[\text{a.s.}]{} \: \mE^\star
    \end{align}
    holds\footnote{Up to an arbitrary invertible matrix (i.e., if \(\mE^\star\) is a solution, so is \(\mT \mE^\star\) for any \(k\times k\) invertible matrix \(\mT\)).} almost surely in either of the following regimes:
    \begin{itemize}
        \item as $\alpha \to +\infty$ (perfect augmentation-noise alignment) for any fixed sample size $n \in \mathbb{N}$.
        \item as $n \to +\infty$ (infinite samples), if and only if the alignment $\alpha \geq 0$ satisfies:
        \begin{align}\label{eq:cond_gamma_reconstruction}   
            \alpha^2 \: > \: \alpha_{\mathrm{RC}}^2 \coloneqq \max_{i \in \integ{k+1: d}} \: \frac{\lambda^{\bm{\Gamma}}_i}{\eta^2} - \frac{\lambda^{\bm{\Theta}}_i}{\lambda^{\bm{\Gamma}}_i} - 1 \quad \text{where} \quad \eta = \min_{i \in \integ{k}} \: \frac{\frac{1}{n} \kappa_i^2}{\sqrt{\frac{1}{n} \kappa_i^2 + \lambda^{\bm{\Theta}}_i}} \:.
        \end{align}
    \end{itemize}
\end{restatable}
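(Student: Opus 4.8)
The plan is to reduce the statement to the convergence of one \(k\)-dimensional subspace. Write \(\mathbf A:=\tfrac1n\mX^\top\mX+\bm{\Sigma}\) and \(\widetilde{\mathbf A}:=\tfrac1n\widetilde{\mX}^\top\widetilde{\mX}+\widetilde{\bm{\Sigma}}\). By \Cref{prop:closed_form_reconstruction} every encoder solution reads \(\mE^\star=\mT\,\mP_k^\top\mathbf A^{-1/2}\), hence is determined (modulo the free invertible \(\mT\)) by its row space \(\mathrm{rowspace}(\mE^\star)=\mathrm{range}(\mathbf A^{-1/2}\mP_k)\); so ``\(\widetilde{\mE}^\star\to\mE^\star\) up to an invertible matrix'' reduces to showing that the orthogonal projector onto \(\mathrm{rowspace}(\widetilde{\mE}^\star)\) converges almost surely to that onto \(\mathrm{rowspace}(\mE^\star)\) (from which matrix convergence follows by aligning \(\widetilde{\mT}\)). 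Two elementary facts pin down the clean side. Since \(\bm{\theta},\bm{\gamma}\) in \(\mathcal{T}(\alpha)\) are independent and centered, by \Cref{eq:cov} one has \(\bm{\Sigma}=\bm{\Theta}+\alpha^2\bm{\Gamma}\), and the corruption only shifts per-sample means there, so \(\widetilde{\bm{\Sigma}}=\bm{\Theta}+\alpha^2\bm{\Gamma}\) too; moreover, because \(\bm{\Gamma}\) vanishes on the signal block \(\integ{k}\), almost surely \(\mQ_k^\top\bm{\gamma}_i=0\), i.e.\ the corruption lives entirely in the noise subspace. With \(\tfrac1n\mX^\top\mX=\mQ(\tfrac1n\mK^2)\mQ^\top\) the matrix \(\tfrac1n\mX^\top\mX\,\mathbf A^{-1/2}=\mQ\,\mathrm{diag}(\phi_i)\mQ^\top\) has \(\phi_i=\tfrac1n\kappa_i^2/\sqrt{\tfrac1n\kappa_i^2+\lambda_i^{\bm{\Theta}}}>0\) for \(i\le k\) and \(\phi_i=0\) for \(i>k\); thus \(\mP_k\) spans \(\mathrm{range}(\mQ_k)\), and since \(\mathbf A^{-1/2}\) (diagonal in \(\mQ\)) preserves it, \(\mathrm{rowspace}(\mE^\star)=\mathrm{range}(\mQ_k)=\mathrm{span}(\vq_1,\dots,\vq_k)\), independent of \(n\), \(\alpha\), \(\bm{\Theta}\).

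For \(n\to\infty\) at fixed \(\alpha\): in the \(\mQ\)-basis the signal block of \(\tfrac1n\widetilde{\mX}^\top\widetilde{\mX}\) is \(\tfrac1n\mK_k^2\) exactly, the signal-noise block is \(O(n^{-1/2})\), and the strong law sends the noise block a.s.\ to the restriction of \(\bm{\Lambda}_{\bm{\Gamma}}\); by continuity of the matrix inverse square root on positive-definite matrices, \(\tfrac1n\widetilde{\mX}^\top\widetilde{\mX}\,\widetilde{\mathbf A}^{-1/2}\to\mQ\,\mathrm{diag}(\widetilde\phi_i)\mQ^\top\) a.s., with \(\widetilde\phi_i=\phi_i\) for \(i\le k\) and \(\widetilde\phi_j=\lambda_j^{\bm{\Gamma}}/\sqrt{(1+\alpha^2)\lambda_j^{\bm{\Gamma}}+\lambda_j^{\bm{\Theta}}}\) for \(j>k\). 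The limit's top-\(k\) right-singular subspace equals \(\mathrm{range}(\mQ_k)\) precisely when \(\eta=\min_{i\le k}\widetilde\phi_i>\max_{j>k}\widetilde\phi_j\); the inequality \(\eta^2\big[(1+\alpha^2)\lambda_j^{\bm{\Gamma}}+\lambda_j^{\bm{\Theta}}\big]>(\lambda_j^{\bm{\Gamma}})^2\), solved for \(\alpha^2\) and maximized over \(j\in\integ{k+1:d}\), is exactly \(\alpha^2>\alpha_{\mathrm{RC}}^2\). Under that strict gap, a Davis--Kahan \(\sin\Theta\) bound applied to \((\tfrac1n\widetilde{\mX}^\top\widetilde{\mX}\,\widetilde{\mathbf A}^{-1/2})^\top(\tfrac1n\widetilde{\mX}^\top\widetilde{\mX}\,\widetilde{\mathbf A}^{-1/2})\) gives \(\widetilde{\mP}_k\widetilde{\mP}_k^\top\to\mQ_k\mQ_k^\top\) a.s., and since \(\widetilde{\mathbf A}^{-1/2}\to\mQ\,\mathrm{diag}(\widetilde m_i^{-1/2})\mQ^\top\) preserves \(\mathrm{range}(\mQ_k)\), \(\mathrm{rowspace}(\widetilde{\mE}^\star)\to\mathrm{range}(\mQ_k)=\mathrm{rowspace}(\mE^\star)\). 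Conversely, if \(\alpha^2<\alpha_{\mathrm{RC}}^2\) some \(\widetilde\phi_j>\eta\) (\(j>k\)), so the limiting top-\(k\) subspace contains the noise direction \(\vq_j\) and misses a signal direction; and if \(\alpha^2=\alpha_{\mathrm{RC}}^2\) the limit carries a repeated singular value \(\eta\) straddling a signal and a noise direction, so which of them takes the \(k\)-th slot at finite \(n\) is decided by the mean-zero correction \(\tfrac1n\sum_i\bm{\gamma}_i\bm{\gamma}_i^\top-\bm{\Gamma}\), whose fluctuations in that \(2\)-plane (non-degenerate because \(\lambda_j^{\bm{\Gamma}}>0\)) do not stabilize, so the projector does not converge a.s.; this yields the ``if and only if''. (Here \(\eta,\alpha_{\mathrm{RC}}^2\) are read with \(\tfrac1n\mK^2\) replaced by its limit, assumed to exist.)

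For \(\alpha\to\infty\) at fixed \(n\): keeping the sample realized, a.s.\ \(\mQ_k^\top\bm{\gamma}_i=0\), so in the \(\mQ\)-basis \(\mQ^\top(\tfrac1n\widetilde{\mX}^\top\widetilde{\mX})\mQ\) has signal block exactly \(\tfrac1n\mK_k^2\), and \(\mQ^\top\widetilde{\mathbf A}\mQ=\mQ^\top(\tfrac1n\widetilde{\mX}^\top\widetilde{\mX})\mQ+\bm{\Lambda}_{\bm{\Theta}}+\alpha^2\bm{\Lambda}_{\bm{\Gamma}}\) has a fixed signal block and a noise block that blows up. Block inversion then shows \(\mQ^\top\widetilde{\mathbf A}^{-1/2}\mQ\) converges to a matrix whose only non-vanishing part is an invertible \(k\times k\) signal block (the rest being \(O(1/\alpha)\)); consequently \(\mQ^\top(\tfrac1n\widetilde{\mX}^\top\widetilde{\mX})\widetilde{\mathbf A}^{-1/2}\mQ\) has its last \(d-k\) columns of size \(O(1/\alpha)\) and its first \(k\) columns converging to a \(d\times k\) matrix of rank \(k\), so for large \(\alpha\) a spectral gap separates the \(k\)-th and \((k{+}1)\)-th singular values. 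The top-\(k\) right-singular subspace then converges to that of the limit --- which is \(\mathrm{range}(\mQ_k)\) since the limit's last \(d-k\) columns vanish --- and composing with \(\widetilde{\mathbf A}^{-1/2}\) (whose limiting action on \(\mathrm{range}(\mQ_k)\) is invertible and stays in \(\mathrm{range}(\mQ_k)\)) gives \(\mathrm{rowspace}(\widetilde{\mE}^\star)\to\mathrm{range}(\mQ_k)=\mathrm{rowspace}(\mE^\star)\). The step I expect to be the main obstacle, in both regimes, is this upgrade from convergence of the \emph{matrices} \(\tfrac1n\widetilde{\mX}^\top\widetilde{\mX}\,\widetilde{\mathbf A}^{-1/2}\) to convergence of their top-\(k\) singular \emph{subspaces}: it is legitimate only under a gap between the \(k\)-th and \((k{+}1)\)-th singular values, and the quantitative location of that gap is precisely what produces the threshold \(\alpha_{\mathrm{RC}}^2\) and, at the boundary, the breakdown that makes the criterion an exact ``iff''.
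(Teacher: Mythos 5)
Your proof follows the same route as the paper's: after invoking \Cref{prop:closed_form_reconstruction} with $\bm{\Sigma} = \bm{\Theta} + \alpha^2 \bm{\Gamma}$, you decompose everything in the $(\mQ_1\mid\mQ_2)$ block basis, use the strong law of large numbers for $\tfrac1n \mN_2^\top \mN_2 \to \bm{\Lambda}_{\bm{\Gamma},2}$, and extract the threshold $\alpha_{\mathrm{RC}}$ by comparing the ``signal'' and ``noise'' singular values of $\tfrac1n\widetilde{\mX}^\top\widetilde{\mX}\,\bigl(\tfrac1n\widetilde{\mX}^\top\widetilde{\mX}+\bm{\Sigma}\bigr)^{-1/2}$, which is exactly what the paper does. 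You additionally make explicit two points the paper leaves implicit: the reduction of ``$\widetilde{\mE}^\star\to\mE^\star$ up to an invertible matrix'' to convergence of row-space projectors (justified via a Davis--Kahan/spectral-gap argument), and the converse (\emph{only-if}) direction including the boundary case $\alpha^2 = \alpha_{\mathrm{RC}}^2$, which the paper does not argue at all.
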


\begin{restatable}{proposition}{theoremImpactDA}\label{theorem:impact_DA}[Joint-Embedding]
Let \(\mW^\star\) (resp. \(\widetilde{\W}^\star\)) be the linear model solving \Cref{eq:ssl} for \(\mX\) (resp. the corrupted \(\widetilde{\mX}\)).
The limit:
\begin{align}
    \widetilde{\mW}^\star \: \xrightarrow[\text{a.s.}]{} \: \mW^\star
\end{align}
holds\footnote{Up to an arbitrary orthogonal rotation (i.e., if \(\mW^\star\) is a solution, so is \(\mU \mW^\star\) for any \(k\times k\) orthogonal matrix \(\mU\)).} almost surely in either of the following regimes:
\begin{itemize}
    \item as $\alpha \to +\infty$ (perfect augmentation-noise alignment) for any fixed sample size $n \in \mathbb{N}$.
    \item as $n \to +\infty$ (infinite samples), if and only if the alignment $\alpha \geq 0$ satisfies:
    \begin{align}\label{eq:cond_gamma}
        \alpha^2 \: > \: \alpha_{\mathrm{JE}}^2 \coloneqq \max_{i \in \integ{k+1: d}} \: \frac{1-\delta}{\delta} - \frac{\lambda^{\bm{\Theta}}_i}{\lambda^{\bm{\Gamma}}_i} \quad \text{where} \quad \delta = \min_{i \in \integ{k}} \: \frac{\frac{1}{n} \kappa_i^2}{\frac{1}{n} \kappa_i^2 + \lambda^{\bm{\Theta}}_i} \:.
    \end{align}
\end{itemize}
\end{restatable}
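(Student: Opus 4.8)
The plan is to plug the Gaussian family $\mathcal{T}(\alpha)$ into \Cref{prop:closed_form_ssl}, diagonalise everything in the orthonormal basis $\mQ$ of \Cref{sec:setup}, and reduce the claimed convergence to a spectral-gap condition on $\lim\widetilde{\mS}^{-\frac12}\widetilde{\mG}\widetilde{\mS}^{-\frac12}$. First, independence and zero mean of $\bm\theta,\bm\gamma$ give $\mS=\tfrac1n\mX^\top\mX+\bm\Theta+\alpha^2\bm\Gamma$, $\mG=\tfrac1n\mX^\top\mX$, and likewise $\widetilde{\mS}=\tfrac1n\widetilde{\mX}^\top\widetilde{\mX}+\bm\Theta+\alpha^2\bm\Gamma$, $\widetilde{\mG}=\tfrac1n\widetilde{\mX}^\top\widetilde{\mX}$. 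Since $\bm\Theta,\bm\Gamma$ are $\mQ$-diagonal, $\mX=\mL\mK\mQ^\top$, $\lambda^{\bm\Gamma}_i=0$ for $i\le k$ and $\kappa_i=0$ for $i>k$, the clean matrices are $\mQ$-diagonal and $\mS^{-\frac12}\mG\mS^{-\frac12}$ has eigenvalues $\omega_i=\tfrac{\frac1n\kappa_i^2}{\frac1n\kappa_i^2+\lambda^{\bm\Theta}_i}$ for $i\le k$ (all $>0$ since $\kappa_i>0$) and $\omega_i=0$ for $i>k$. Hence in \Cref{prop:closed_form_ssl} one may take $\mQ$ to be that basis, the top-$k$ eigenspace is $\mathrm{span}(\mQ_k)$, and $\mW^\star=\mU\,\mathrm{diag}\bigl((\tfrac1n\kappa_1^2+\lambda^{\bm\Theta}_1)^{-\frac12},\dots,(\tfrac1n\kappa_k^2+\lambda^{\bm\Theta}_k)^{-\frac12}\bigr)\mQ_k^\top$ for an arbitrary orthogonal $\mU$ --- supported on $\mathrm{span}(\mQ_k)$ and independent of $\alpha$.

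\textbf{Limits of the corrupted matrices.} Because $\bm\gamma_i$ is supported on the noise coordinates $\integ{k+1:d}$ and $\vx_i$ on $\integ{k}$, in the $\mQ$ basis $\tfrac1n\widetilde{\mX}^\top\widetilde{\mX}$ has leading $k\times k$ block exactly $\tfrac1n\mathrm{diag}(\kappa_i^2)_{i\le k}$, off-diagonal block $\tfrac1n\sum_i(\vx_i)_{\integ{k}}(\bm\gamma_i)_{\integ{k+1:d}}^\top$, and trailing block $\tfrac1n\sum_i(\bm\gamma_i)_{\integ{k+1:d}}(\bm\gamma_i)_{\integ{k+1:d}}^\top$. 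I would then invoke the strong law of large numbers (using boundedness, and convergence, of $\tfrac1n\kappa_i^2$, as is implicit in the statement): the off-diagonal block tends a.s.\ to $\mathbf 0$ and the trailing block to $\bm\Gamma$, so as $n\to\infty$ at fixed $\alpha$, $\widetilde{\mS}^{-\frac12}\widetilde{\mG}\widetilde{\mS}^{-\frac12}$ converges a.s.\ to the $\mQ$-diagonal matrix with eigenvalues $\widetilde\omega_i=\omega_i$ for $i\le k$ and $\widetilde\omega_i=\bigl(1+\alpha^2+\lambda^{\bm\Theta}_i/\lambda^{\bm\Gamma}_i\bigr)^{-1}$ for $i>k$, while the leading $k$ diagonal entries of $\widetilde{\mS}^{-\frac12}$ converge to those of $\mS^{-\frac12}$. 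At fixed $n$, as $\alpha\to\infty$, the trailing block of $\widetilde{\mS}$ is $\alpha^2$-dominated, so a Schur-complement computation gives $\widetilde{\mS}^{-\frac12}\to\mathrm{diag}\bigl((\tfrac1n\kappa_1^2+\lambda^{\bm\Theta}_1)^{-\frac12},\dots,(\tfrac1n\kappa_k^2+\lambda^{\bm\Theta}_k)^{-\frac12},0,\dots,0\bigr)$ and hence $\widetilde{\mS}^{-\frac12}\widetilde{\mG}\widetilde{\mS}^{-\frac12}\to\mathrm{diag}(\omega_1,\dots,\omega_k,0,\dots,0)$.

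\textbf{From spectra to the solutions.} In every case the limiting matrix is $\mQ$-diagonal with top eigenvalues $\{\omega_i\}_{i\le k}$; its top-$k$ eigenspace equals $\mathrm{span}(\mQ_k)$ exactly when $\min_{i\le k}\omega_i>\max_{j>k}\widetilde\omega_j$. As $\alpha\to\infty$ (fixed $n$) the remaining eigenvalues are $0<\min_i\omega_i$, so this holds; as $n\to\infty$ (fixed $\alpha$), with $\min_{i\le k}\omega_i=\delta$ and $\widetilde\omega_j=(1+\alpha^2+\lambda^{\bm\Theta}_j/\lambda^{\bm\Gamma}_j)^{-1}$, the condition rearranges precisely into $\alpha^2>\alpha_{\mathrm{JE}}^2$. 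When it holds, continuity of the top-$k$ spectral projector under an eigenvalue gap (Davis--Kahan) shows the order-$k$ subspace returned by \Cref{prop:closed_form_ssl} for $\widetilde{\mX}$ converges to $\mathrm{span}(\mQ_k)$; combined with the convergence of the leading block of $\widetilde{\mS}^{-\frac12}$ to that of $\mS^{-\frac12}$ and with $\mQ_k^\top$ annihilating the noise block, this gives $\widetilde{\mW}^\star\to\mU'\mW^\star$ for some orthogonal $\mU'$ (absorbing the free orthogonal factor and rotations inside eigenspaces of equal $\omega_i$). Conversely, if $\alpha^2\le\alpha_{\mathrm{JE}}^2$, some $\widetilde\omega_j$ with $j>k$ is $\ge\delta=\min_{i\le k}\omega_i$, so the limiting top-$k$ eigenspace is not $\mathrm{span}(\mQ_k)$ --- it contains, or in the boundary case $\alpha^2=\alpha_{\mathrm{JE}}^2$ is degenerately tied with, a noise coordinate. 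Since $\widetilde{\mS}^{-\frac12}$ tends to an invertible matrix preserving the $\mQ$-block split, the row space of $\widetilde{\mW}^\star$ then fails to converge to that of $\mW^\star$, hence $\widetilde{\mW}^\star$ cannot approach $\mU\mW^\star$ for any orthogonal $\mU$; this is the ``only if''.

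\textbf{Expected main obstacle.} The first two steps are routine moment and SLLN computations. The hard part will be the last step: upgrading ``the limiting Gram matrix has the right top-$k$ eigenspace'' to ``$\widetilde{\mW}^\star\to\mW^\star$ modulo an $\mathrm{O}(k)$ factor''. This needs a Davis--Kahan-type perturbation estimate uniform enough to be pushed through the almost-sure limit, explicit care that the closed form is only determined up to an orthogonal factor and up to rotations within equal-$\omega_i$ eigenspaces (so ``convergence'' must be read modulo that ambiguity), and, for the boundary case in the ``only if'', a short argument that the eigenvector selection genuinely fails rather than being merely ill-defined.
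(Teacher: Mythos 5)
Your proposal follows essentially the same route as the paper's proof: plug $\mathcal{T}(\alpha)$ into \Cref{prop:closed_form_ssl}, diagonalize $\mS,\mG,\widetilde{\mS},\widetilde{\mG}$ in the $\mQ$ basis, apply the strong law of large numbers (respectively let $\alpha\to\infty$ at fixed $n$), and reduce convergence of $\widetilde{\mW}^\star$ to the requirement that the $k$ largest eigenvalues of the limiting $\widetilde{\mS}^{-1/2}\widetilde{\mG}\widetilde{\mS}^{-1/2}$ sit on the important components, which rearranges to $\alpha^2>\alpha_{\mathrm{JE}}^2$. You are in fact slightly more careful than the paper in three places: you keep the off-diagonal cross block $\tfrac1n\mX_1^\top\mN_2$ and argue it vanishes a.s.\ rather than silently dropping it when writing $\widetilde{\mS}$ and $\widetilde{\mG}$ as $\mQ$-block-diagonal, you explicitly invoke a Davis--Kahan-type gap bound to pass from convergence of the Gram matrix to convergence of the top-$k$ eigenspace, and you supply the ``only if'' direction that the paper leaves implicit; these are refinements of the same argument rather than a different route.
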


The above \Cref{theorem:impact_DA_reconstruction,theorem:impact_DA} reveal that, when augmentations are well aligned with the noise \ie when $\alpha$ is large enough, undesired \emph{noise components} are removed from the obtained SSL representation for both families of models, even when combined with other augmentations.
However, the alignment requirement persists even as the sample size $n$ becomes arbitrarily large.
Therefore, in SSL, achieving optimal performance requires that the data augmentation process be sufficiently well aligned with the \emph{irrelevant noise features} in the data.
This marks a key difference from supervised models. Unlike SSL, supervised models can overcome misalignment between augmentations and noise with enough samples, as it learns robustness by observing different noise realizations across identically labeled data. This underscores the critical role of augmentations in SSL. \Cref{fig:2D_embeddings} illustrates this by visualizing the embedding spaces of a supervised model and the VICReg \cite{bardes2021vicreg} SSL model, revealing the latter's susceptibility to noise when augmentations lack proper alignment. This finding is consistent with an empirical study by \cite{morningstar2024augmentations}, which concluded that improving augmentations is more impactful than altering architectural designs.

Interestingly, the above results reveal that \emph{reconstruction} (\Cref{theorem:impact_DA_reconstruction}) and \emph{joint-embedding} (\Cref{theorem:impact_DA}) methods exhibit different alignment requirements to achieve optimal performance. We next analyze these differences.

\subsection{Comparison of Joint-Embedding and Reconstruction-Based Methods via Augmentation Alignment Requirement}\label{sec:comparison_ssl_alpha}

Leveraging \Cref{theorem:impact_DA_reconstruction,theorem:impact_DA}, we obtain the following key result.
\begin{restatable}{corollary}{corollarynoise}\label{cor:comparison_je_vs_reconstruction}
    Let $\alpha_{\mathrm{JE}}$, $\delta$, $\alpha_{\mathrm{RC}}$, and $\eta$ be defined as in \Cref{theorem:impact_DA} and \Cref{theorem:impact_DA_reconstruction}. 
    \begin{itemize}[leftmargin=2em]
        \item If $\max_{i \in \integ{k+1 : d}} \: \lambda_i^{\bm{\Gamma}} < \dfrac{\eta^2}{\delta}$ (low noise), then $\alpha_{\mathrm{JE}} > \alpha_{\mathrm{RC}}$ (reconstruction is preferable).
        \item If $\min_{i \in \integ{k+1 : d}} \: \lambda_i^{\bm{\Gamma}} > \dfrac{\eta^2}{\delta}$ (high noise), then $\alpha_{\mathrm{JE}} < \alpha_{\mathrm{RC}}$ (joint-embedding is preferable).
    \end{itemize}
\end{restatable}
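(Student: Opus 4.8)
The plan is to reduce the comparison of the two thresholds to a single, term‑by‑term inequality between the expressions sitting inside the two maxima. For each noise component $i \in \integ{k+1:d}$ write
\[
r_i \coloneqq \frac{\lambda_i^{\bm{\Gamma}}}{\eta^2} - \frac{\lambda_i^{\bm{\Theta}}}{\lambda_i^{\bm{\Gamma}}} - 1, \qquad
j_i \coloneqq \frac{1-\delta}{\delta} - \frac{\lambda_i^{\bm{\Theta}}}{\lambda_i^{\bm{\Gamma}}},
\]
so that, by the definitions in \Cref{theorem:impact_DA_reconstruction,theorem:impact_DA}, $\alpha_{\mathrm{RC}}^2 = \max_{i} r_i$ and $\alpha_{\mathrm{JE}}^2 = \max_{i} j_i$. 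The key observation is that the only index‑dependent but ``shared'' quantity, $\lambda_i^{\bm{\Theta}}/\lambda_i^{\bm{\Gamma}}$, enters both $r_i$ and $j_i$ with the same coefficient and therefore cancels:
\[
r_i - j_i = \frac{\lambda_i^{\bm{\Gamma}}}{\eta^2} - 1 - \frac{1-\delta}{\delta} = \frac{\lambda_i^{\bm{\Gamma}}}{\eta^2} - \frac{1}{\delta} = \frac{1}{\eta^2}\Bigl(\lambda_i^{\bm{\Gamma}} - \tfrac{\eta^2}{\delta}\Bigr).
\]
Because $\kappa_i > 0$ on the important components $\integ{k}$, both $\eta$ and $\delta$ are strictly positive (each is a minimum of strictly positive ratios), so $\mathrm{sign}(r_i - j_i) = \mathrm{sign}\bigl(\lambda_i^{\bm{\Gamma}} - \eta^2/\delta\bigr)$. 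This already isolates $\eta^2/\delta$ as the quantity against which the noise magnitudes $\lambda_i^{\bm{\Gamma}}$ must be compared, matching the hypotheses of the two cases.

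Next I would upgrade this pointwise sign information to an inequality between the maxima over the finite set $\integ{k+1:d}$, using the elementary fact that if $f_i < g_i$ for every $i$ in a finite index set then $\max_i f_i < \max_i g_i$ (pick $i^\star \in \arg\max_i f_i$; then $\max_i f_i = f_{i^\star} < g_{i^\star} \le \max_i g_i$). In the low‑noise case the hypothesis $\max_{i} \lambda_i^{\bm{\Gamma}} < \eta^2/\delta$ forces $\lambda_i^{\bm{\Gamma}} < \eta^2/\delta$, hence $r_i < j_i$, for all $i$, whence $\alpha_{\mathrm{RC}}^2 < \alpha_{\mathrm{JE}}^2$, i.e.\ $\alpha_{\mathrm{JE}} > \alpha_{\mathrm{RC}}$ and reconstruction imposes the weaker alignment requirement. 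Symmetrically, in the high‑noise case $\min_{i} \lambda_i^{\bm{\Gamma}} > \eta^2/\delta$ forces $r_i > j_i$ for all $i$, so $\alpha_{\mathrm{JE}}^2 < \alpha_{\mathrm{RC}}^2$, i.e.\ $\alpha_{\mathrm{JE}} < \alpha_{\mathrm{RC}}$.

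There is essentially no deep obstacle here; the work is bookkeeping and a careful reading of the statement. The two points to be careful about are: (i) that $\eta$ and $\delta$ are genuine scalars (their defining minima may be attained at different indices of $\integ{k}$, but this never enters the argument), so the cancellation above is unconditional; and (ii) that the comparison is most cleanly stated at the level of the squared thresholds $\alpha_{\mathrm{RC}}^2$ and $\alpha_{\mathrm{JE}}^2$, since either may a priori be negative — in which event the corresponding condition ``$\alpha^2 > \alpha_{\mathrm{RC}}^2$'' (resp.\ ``$\alpha^2 > \alpha_{\mathrm{JE}}^2$'') in \Cref{theorem:impact_DA_reconstruction,theorem:impact_DA} is vacuous for all $\alpha \ge 0$, and the inequalities between $\alpha_{\mathrm{JE}}$ and $\alpha_{\mathrm{RC}}$ in the statement should be read in this convention. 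Everything else follows from the displayed one‑line identity for $r_i - j_i$.
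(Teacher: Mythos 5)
Your proposal is correct and takes essentially the same route as the paper: compare the term-by-term quantities inside the two maxima, observe that the common $\lambda_i^{\bm{\Theta}}/\lambda_i^{\bm{\Gamma}}$ term cancels so that the sign of $r_i - j_i$ is governed by $\lambda_i^{\bm{\Gamma}}$ versus $\eta^2/\delta$, and then pass the pointwise inequality through the max. Your write-up is in fact slightly more explicit than the paper's, which states the pointwise sufficient condition and rearranges it but leaves the ``pointwise implies max'' step implicit; your remark about reading the comparison at the level of $\alpha^2$ when a threshold is negative is a small but sensible clarification the paper omits.
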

This result shows that when the spectral norm of the noise covariance $\mathbf{\Gamma}$ is small, \emph{reconstruction}-based methods impose a less stringent alignment compared to \emph{joint-embedding} methods. Conversely, when the noise magnitude is large, \emph{joint-embedding} methods exhibit lower sensitivity to the augmentation-noise alignment compared to their \emph{reconstruction}-based counterparts. Ultimately, the goal is to minimize the alignment requirement, as the \emph{irrelevant components} are typically unknown in real-world applications.

\emph{Reconstruction}-based approaches are therefore well-suited for scenarios with weak, \emph{irrelevant noise features}. Intuitively, the core \emph{important components} in such settings possess the greatest magnitude and are consequently prioritized by the model during reconstruction. Due to their reconstruction objective, these methods depend less on data augmentations, which explains their superior performance and robustness over \emph{joint-embedding} techniques in this particular context.

However, when strong \emph{noise features} obscure the \emph{important features} within the raw input signal, \emph{joint-embedding} methods demonstrate greater robustness. This is because \emph{joint-embedding} techniques prioritize latent space prediction, thereby bypassing the need to reconstruct \emph{irrelevant noise components} as model outputs. 
Consequently, in real-world scenarios where the extent of \emph{irrelevant features} (typically image backgrounds or experimental batch effects in scRNA-seq data) cannot be precisely quantified, \emph{joint-embedding} approaches appear more reliable. This preference is further underscored by the principle that high-amplitude noise naturally exerts a more significant impact on the final representation than low-amplitude noise, making robustness crucial when noise levels are uncertain.
This practical advantage also explains the community's preference for \emph{joint-embedding} approaches in challenging datasets \cite{zimmermann2024virchow2,kundu2024assessing,doron2023unbiased}.

\begin{conclusionbox}
\textbf{Key takeaway.} When \emph{irrelevant features} have low magnitude and there is limited prior information on effective augmentations, \emph{reconstruction} is preferable. In contrast, when these \emph{irrelevant features} are non-negligible (as is common with real-world data) or effective augmentations can be identified, \emph{joint-embedding} is preferable.
\end{conclusionbox}

\begin{figure*}[t!]
    \centering
\includegraphics[width=\linewidth]{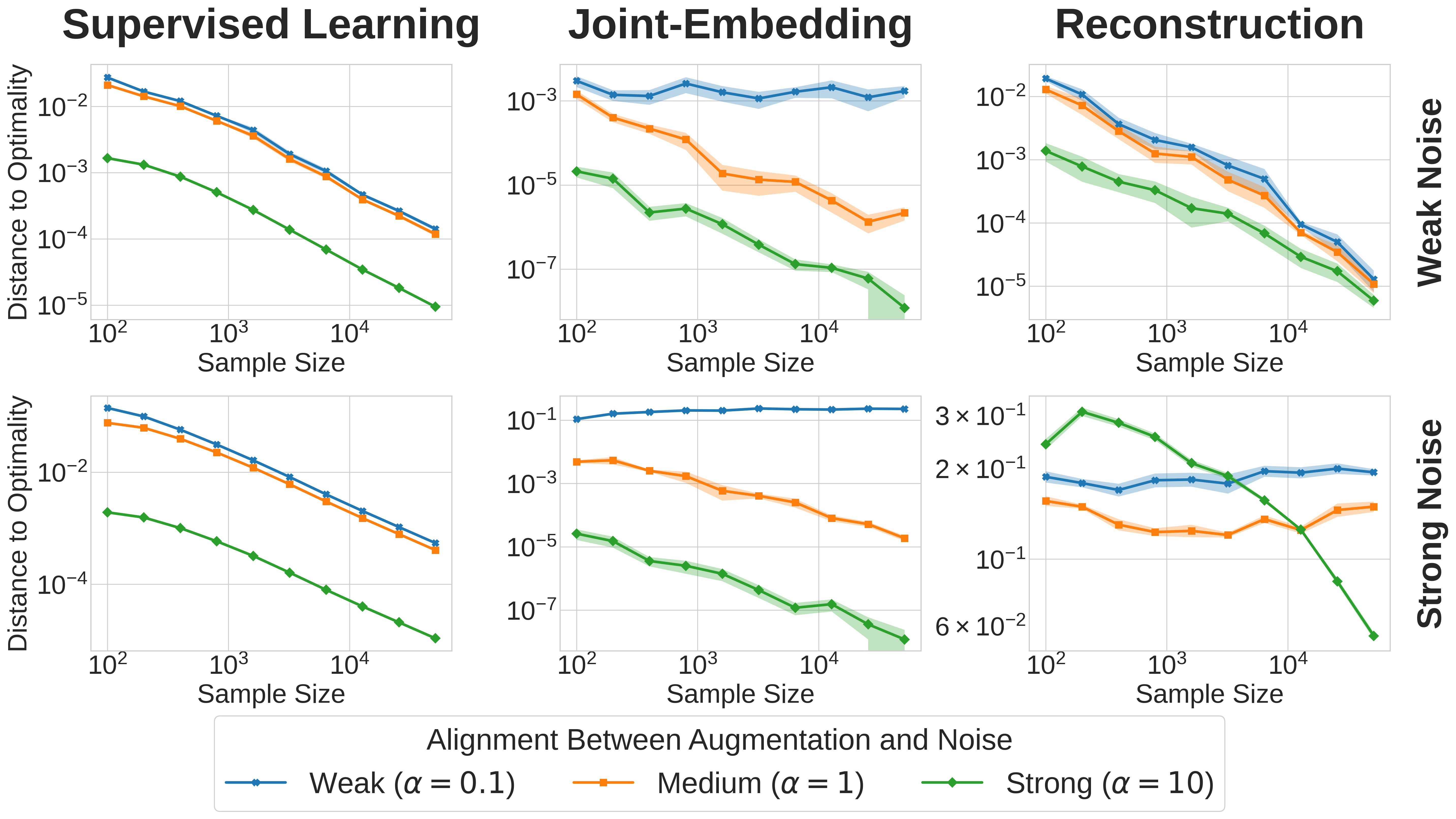}
\caption{
Performance of linear supervised and SSL models (\Cref{sec:reconstruction_ssl,sec:je_ssl,prop:closed_form_reconstruction,prop:closed_form_ssl}) on MNIST corrupted with synthetic Gaussian noise (\Cref{sec:setup}) with various augmentation alignment $\alpha$ \Cref{sec:theory}.
Each subplot's y-axis is the absolute difference of supervised linear probing loss (on clean vs. corrupted data) and its x-axis is the sample size $n$.
This figure highlights that \emph{joint-embedding} is preferable to \emph{reconstruction} in the presence of strong \emph{irrelevant noise features}. Conversely, \emph{reconstruction} requires less tailored augmentation when dealing with weak \emph{irrelevant noise features}. 
Weak noise corresponds to $\lambda^{\bm{\Gamma}}_{\mathrm{max}} = 10^3$ and strong noise to $\lambda^{\bm{\Gamma}}_{\mathrm{max}} = 10^6$ (details in \Cref{sec:exp_linear_models}).
}
\label{fig:linear_model_main}
\end{figure*}

\section{Experiments}\label{sec:exps}

This section validates the theoretical findings of \Cref{sec:theory} through experiments on linear models (\Cref{sec:exp_linear_models_main}) and deep networks (\Cref{sec:exp_deep_networks}). These experiments confirm that the results from the linear model are consistent with those observed in the nonlinear setting of deep networks.

\subsection{Experiments With Linear Models}\label{sec:exp_linear_models_main}

We first validate the theoretical results of \Cref{sec:theory} through experiments with linear models. Data features are corrupted by adding synthetic Gaussian noise, allowing us to precisely control the noise magnitude and its alignment with data augmentations.
The details of our experimental design are provided in \Cref{sec:exp_linear_models}. Our primary results are illustrated in \Cref{fig:linear_model_main}. 
This figure effectively illustrates contrasting behaviors between the various types of methods as sample size and noise magnitude vary. On the left, one can notice that the supervised model achieves optimal performance with either increasing sample size or increasing alignment, with any augmentation and regardless of the \emph{noise magnitude}, confirming the result of \Cref{prop:ols_asymptotic}.
In contrast, SSL models exhibit different sensitivities, as predicted in \Cref{theorem:impact_DA_reconstruction,theorem:impact_DA}. The middle panel shows that \emph{joint-embedding} indeed requires a minimal alignment between augmentation and noise to reach optimal performance (as predicted in \Cref{theorem:impact_DA}). Notably, \emph{joint-embedding} maintains robustness even with increasing noise magnitude, as shown in \Cref{cor:comparison_je_vs_reconstruction}.
On the right panel, we can see that under weak noise conditions, \emph{reconstruction} is robust to the choice of augmentation. However, as noise becomes stronger, \emph{reconstruction} performance degrades and necessitates a strong alignment between augmentation and noise.
These observations confirm the results of \Cref{theorem:impact_DA,theorem:impact_DA_reconstruction,cor:comparison_je_vs_reconstruction}. 
These findings are further supported by experiments on other datasets, including Fashion-MNIST, Kuzushiji-MNIST and the single-cell RNA-seq data from \cite{macosko2015highly}, presented in \Cref{fig:mnist_linear_models,fig:fashion_mnist_linear_models,fig:Kuzushiji_linear_models,fig:macosko_linear_models} in \Cref{sec:exp_linear_models}. All experimental outcomes align with and confirm the theoretical insights detailed in \Cref{sec:theory}.

\subsection{Experiments with Deep Networks on Images with Various Corruptions}\label{sec:exp_deep_networks}

We conduct experiments using both ViT \cite{dosovitskiy2020image} and ResNet \cite{he2016deep} architectures. Our evaluation focuses on top-1 linear probing accuracy on ImageNet-100 \cite{deng2009imagenet}.
ImageNet images inherently contain non-negligible features irrelevant to the classification task \cite{balestriero2024learning} (\eg background noise elements), which contributes to the superior performance of \emph{joint-embedding} methods over \emph{reconstruction} methods, as demonstrated in several benchmarks \cite{da2022solo,chen2021empirical}. To further introduce and control the magnitude of \emph{irrelevant noise features}, we utilize the ImageNet-C corruptions \cite{hendrycks2018benchmarking}, which offers corruptions at various severity levels. 
The results are presented in \Cref{tab:results_robustness_ssl_main_with_drop}. The performance of MAE (ViT) \cite{he2021masked}, which uses a \emph{reconstruction} objective, is much more affected by the increasing corruptions than DINO (ViT) \cite{caron2021emerging} and BYOL (ResNet) \cite{grill2020bootstrap}, which use a \emph{joint-embedding} objective. Indeed, there is a 25.1\% average drop in accuracy for MAE when the severity of the corruption increases from 1 to 5, while DINO and BYOL only experience a 10.5\% drop and 12.4\% drop, respectively. All experimental details are provided in \Cref{sec:exp_details}.

We perform further ablation studies in the appendix (\Cref{sec:exp_image_classification}) highlighting this paper's key results. Specifically, our experiments confirm that: (i) SSL is considerably more sensitive to the alignment between augmentations and noise than supervised learning (see \Cref{sec:ssl_less_robust_supervised}), and (ii) aligning augmentations with the underlying noise can enhance the performance of SSL models in noisy data settings (see \Cref{sec:analysis_noise_injection}).

\begin{table}[t]
    \centering
    \caption{
    Linear probing top1 accuracy scores of MAE, DINO, and SimCLR on ImageNet with various corruptions \cite{hendrycks2018benchmarking} and relative performance drop from severity 1 to 5.}
    \label{tab:results_robustness_ssl_main_with_drop}
    \resizebox{\textwidth}{!}{
    \begin{tabular}{l cccc cccc cccc}
    \toprule
    & \multicolumn{4}{c}{Pixelate Corruption} & \multicolumn{4}{c}{Gaussian Noise Corruption} & \multicolumn{4}{c}{Zoomblur Corruption} \\
    \cmidrule(lr){2-5} \cmidrule(lr){6-9} \cmidrule(lr){10-13}
    Method    & Sev. 1 & Sev. 3 & Sev. 5 & Drop (\%) & Sev. 1 & Sev. 3 & Sev. 5 & Drop (\%) & Sev. 1 & Sev. 3 & Sev. 5 & Drop (\%) \\
    \midrule
    BYOL      & 66.7 & 61.3 & 58.7 & 12.0 & 67.2 & 63.1 & 56.4 & 16.1 & 70.1 & 67.0 & 63.8 & 9.0 \\
    DINO      & 68.7 & 64.9 & 60.2 & 12.4 & 67.6 & 62.4 & 59.0 & 12.7 & 69.4 & 67.2 & 64.9 & 6.5 \\
    MAE       & 64.9 & 52.3 & 46.8 & 27.9 & 61.6 & 46.7 & 44.8 & 27.3 & 64.1 & 58.4 & 51.3 & 20.0 \\
    \bottomrule
    \end{tabular}
    }
\end{table}

\section{Conclusion} 

A growing body of work demonstrates that joint-embedding methods often outperform reconstruction methods on real-world datasets, particularly where extracting useful features for downstream tasks from raw signals is challenging \cite{khanal2024does,azizi2021big}.
This is further supported by a consistent empirical finding: reconstruction methods typically necessitate fine-tuning to address the inherent misalignment between the features they learn and those that are perceptually useful for downstream applications \cite{zheng2025learning,balestriero2024learning,liang2022supmae,xie2021composed}.
In this work, we have established a theoretical framework to explain this phenomenon. Our analysis provides clear guidelines for practitioners: opt for \emph{reconstruction} when \emph{irrelevant} components show low variance and prior information on effective augmentations is scarce. In contrast, prefer \emph{joint-embedding} when these \emph{irrelevant} components have high variance magnitude, as is common with real-world data, or when effective augmentations are either readily available or can be found through cross-validation.
 
This paper offers a basis for future work. One could consider extending these results to finite sample size settings to precisely characterize the interplay between sample complexity and augmentation in SSL.

\bibliography{main}


\newpage
\appendix
\onecolumn

\textbf{Outline of the appendix:}
\begin{itemize}
	\item \Cref{sec:proofs}: provide the proofs for the theoretical results.
	\begin{itemize}
        \item \Cref{sec:proof_closed_form_reconstruction} provides the proof of \cref{prop:closed_form_reconstruction} on the closed-form solution for the reconstruction-based SSL problem.
        \item \Cref{sec:proof_closed_form} provides the proof of \cref{prop:closed_form_ssl} on the closed-form solution for the joint-embedding SSL problem.
        \item \Cref{sec:proof_asymptotic_supervised} provides the proof of \cref{prop:ols_asymptotic} on the asymptotic behavior of supervised learning.
        \item \Cref{sec:proof_asymptotic_reconstruction} provides the proof of \cref{theorem:impact_DA_reconstruction} on the asymptotic behavior of reconstruction-based SSL.
        \item \Cref{sec:proof_asymptotic_je} provides the proof of \cref{theorem:impact_DA} on the asymptotic behavior of joint-embedding SSL.
        \item \Cref{sec:proof_comparison_je_vs_reconstruction} provides the proof of \cref{cor:comparison_je_vs_reconstruction} on the comparison between joint-embedding and reconstruction-based SSL.
    \end{itemize}
    \item \Cref{sec:da_ridge} provides the proof of \cref{lemma:da_ridge} on the equivalence between regression with augmented samples and ridge regression.
    \item \Cref{sec:exp_linear_models} provides details on the experiments using linear models.
    \item \Cref{sec:exp_image_classification}
    provides details on the experiments using deep networks.
\end{itemize}

\section{Proofs}\label{sec:proofs}


\subsection{Proof of \Cref{prop:closed_form_reconstruction}}\label{sec:proof_closed_form_reconstruction}

\propClosedFormReconstruction*

\begin{proof}
Relying on the result of \Cref{lemma:da_ridge}, we can rewrite the objective as:
\begin{align}
    \tfrac1n \sum_{i \in \integ{n}} \mathbb{E}_{\tau \sim \mathcal{T}} \left[ \|\vx_i - \mD \mE 
    \tau(\vx_i) \|_2^2 \right] = \tfrac1n \sum_{i \in \integ{n}} \|\vx_i - \mD \mE 
    \mathbb{E}_{\tau \sim \mathcal{T}}[\tau(\vx_i)] \|_2^2 + \| \mD \mE \|^2_{\bm{\Sigma}} \:,
\end{align}
where $\bm{\Sigma} = \tfrac1n \sum_{i} \: \mathbb{E}_{\tau \sim \mathcal{T}} \left[ \tau(\vx_i) \tau(\vx_i)^\top\right] - \mathbb{E}_{\tau \sim \mathcal{T}} \left[\tau(\vx_i) \right] \mathbb{E}_{\tau \sim \mathcal{T}} \left[\tau(\vx_i) \right]^\top$.
Define per-sample means $\overline{\vx}_i \coloneqq \mathbb{E}_{\tau \sim \mathcal{T}}[\tau(\vx_i)]$ and stack them into $\overline{\mX} \coloneqq (\overline{\vx}_1, \dots, \overline{\vx}_n)^\top$. 
The objective can be rewritten as:
\begin{align}
    \min_{\mE \in \R^{k \times d}, \mD \in \R^{d \times k}} \quad & \tfrac1n \|\mX - \overline{\mX} \mE^\top \mD^\top \|_F^2 + \| \mD \mE \|^2_{\bm{\Sigma}} \:.
\end{align}
We consider the equivalent problem on $\mM = \mD \mE \in \mathbb{R}^{d \times d}$:
\begin{align}
    \min_{\mathbf{M} \in \R^{d \times d}} \quad & \tfrac1n \|\mX - \overline{\mX} \mM^\top \|_F^2 + \| \mM \|^2_{\bm{\Sigma}} \quad \text{s.t.} \quad \mathrm{rank}(\mM) \leq k \:.
\end{align}
In the above, the rank constraint captures the fact that $\mM$ has the form $\mM = \mD \mE$ with $\mD \in \R^{d \times k}$ and $\mE \in \R^{k \times d}$.
The objective can be developed as
\begin{align}
    \tfrac1n \tr(\mX^\top \mX) + \tfrac1n \tr(\mM 
    \overline{\mX}^\top \overline{\mX} \mM^\top) - \tfrac2n \tr(\mM 
    \overline{\mX}^\top \mX) + \tr(\mM \bm{\Sigma} \mM^\top) \:.
\end{align}
Keeping only the terms that depend on $\mM$, we can rewrite the problem as
\begin{align}
    \min_{\mathbf{M} \in \R^{d \times d}} \quad & \tr \left(\mM \left(\tfrac1n 
    \overline{\mX}^\top \overline{\mX} + \bm{\Sigma} \right) \mM^\top \right) - \tfrac2n \tr(\mM 
    \overline{\mX}^\top \mX) \quad \text{s.t.} \quad \mathrm{rank}(\mM) \leq k \:.
\end{align}
We now consider the change of variable $\mM' = \mM \left(\tfrac1n 
\overline{\mX}^\top \overline{\mX} + \bm{\Sigma} \right)^{\tfrac12}$.
The above problem then becomes
\begin{align}
    \min_{\mathbf{M}' \in \R^{d \times d}} \quad & \tr(\mM' \mM'^\top) - 2 \tr \left(\mM' \left(\tfrac1n 
    \overline{\mX}^\top \overline{\mX} + \bm{\Sigma} \right)^{-\tfrac12} 
    \overline{\mX}^\top \mX \right) \quad \text{s.t.} \quad \mathrm{rank}(\mM') \leq k \:.
\end{align}
This problem is equivalent to
\begin{align}
    \min_{\mathbf{M}' \in \R^{d \times d}} \quad & \left\| \mM' - \tfrac1n 
    \mX^\top \overline{\mX} 
    (\tfrac1n 
    \overline{\mX}^\top \overline{\mX} + \bm{\Sigma})^{-\tfrac12} \right\|^2_F \quad \text{s.t.} \quad \mathrm{rank}(\mM') \leq k \:.
\end{align}
Therefore the optimal $\mM'$ is the Euclidean projection of $\tfrac1n \mX^\top 
\overline{\mX} \left(\tfrac1n 
\overline{\mX}^\top \overline{\mX} + \bm{\Sigma}\right)^{-\tfrac12}$ onto the set of matrices of rank at most $k$.
This projection has a well-known closed-form solution \cite{eckart1936approximation}. Consider the SVD of $\tfrac1n \mX^\top 
\overline{\mX} \left(\tfrac1n 
\overline{\mX}^\top \overline{\mX} + \bm{\Sigma}\right)^{-\tfrac12}$:
\begin{align}
    \tfrac1n \mX^\top 
    \overline{\mX} \left(\tfrac1n 
    \overline{\mX}^\top \overline{\mX} + \bm{\Sigma}\right)^{-\tfrac12} = \mR \mPhi \mP^\top
\end{align}
where $\mR \in \R^{d \times d}$ is an orthogonal matrix, $\mPhi \in \R^{d \times d}$ is a diagonal matrix with singular values $\phi_1 \geq \dots \geq \phi_d \geq 0$ on the diagonal, and $\mP \in \R^{d \times d}$ is an orthogonal matrix. The optimal $\mM'$ is then given by:
\begin{align}
    \mM'^\star = \mR_k \mPhi_k \mP_k^\top \:,
\end{align}
where $\mR_k \in \R^{d \times k}$ and $\mP_k \in \R^{d \times k}$ contain the first $k$ columns of $\mR$ and $\mP$, respectively, and $\mPhi_k \in \R^{k \times k}$ is a diagonal matrix with the first $k$ largest singular values $\phi_1, \dots, \phi_k$ on the diagonal.

Then the optimal $\mM$ is recovered as
\begin{align}
    \mM^\star = \mM'^\star \left(\tfrac1n 
    \overline{\mX}^\top \overline{\mX} + \bm{\Sigma} \right)^{-\tfrac12} = \mR_k \mPhi_k \mP_k^\top \left(\tfrac1n 
    \overline{\mX}^\top \overline{\mX} + \bm{\Sigma} \right)^{-\tfrac12} \:.
\end{align}
It follows that the optimal decoder $\mD$ and encoder $\mE$ are given by
\begin{align}
    \mD^\star &= \mR_k \mPhi_k \mT^{-1} \\
    \mE^\star &= \mT \mP_k^\top \left(\tfrac1n 
    \overline{\mX}^\top \overline{\mX} + \bm{\Sigma} \right)^{-\tfrac12} \:,
\end{align}
where $\mT$ is any invertible matrix in $\R^{k \times k}$.
\end{proof}

\subsection{Proof of \Cref{prop:closed_form_ssl}}\label{sec:proof_closed_form}

\propClosedFormSSL*

\begin{proof}
First, let us consider the constraint which ensures that the learned representations have orthonormal features. Using that $f_{\mW} : \vx \mapsto \mW \vx$ and $\mS \coloneqq \tfrac1n \mathbb{E}_{\tau \sim \mathcal{T}} \left[\tau(\vx_i) \tau(\vx_i)^\top\right]$, we obtain:
\begin{align}
    \tfrac1n \sum_{i \in \integ{n}} \mW \: \mathbb{E}_{\tau \sim \mathcal{T}} \left[\tau(\vx_i) \tau(\vx_i)^\top\right] \mW^\top = \mW \mS \mW^\top = \mathbf{I}_k \:.
\end{align}
Then, the invariance term measures the consistency between positive views and is given by:
\begin{align}
    \frac{1}{2n} \sum_{i \in \integ{n}} &\mathbb{E}_{\tau_1, \tau_2  \sim \mathcal{T}} \left[\| \mW (\tau_1(\vx_i) - \tau_2(\vx_i)) \|^2 \right] = \tr \left(\mW \bm{\Sigma} \mW^\top \right)
\end{align}
where 
\begin{align}
    \bm{\Sigma} &\coloneqq \frac{1}{2n} \sum_{i \in \integ{n}} \mathbb{E}_{\tau_1, \tau_2  \sim \mathcal{T}} \left[(\tau_1(\vx_i) - \tau_2(\vx_i)) (\tau_1(\vx_i) - \tau_2(\vx_i))^\top\right] \\
    &= \tfrac1n \sum_{i \in \integ{n}} \mathbb{E}_{\tau \sim \mathcal{T}} \left[\tau(\vx_i) \tau(\vx_i)^\top\right] - \mathbb{E}_{\tau \sim \mathcal{T}} \left[\tau(\vx_i) \right] \mathbb{E}_{\tau \sim \mathcal{T}} \left[\tau(\vx_i) \right]^\top \\
    &= \mS - \mG \:.
\end{align}
Thus we have 
\begin{align}
    \tr \left(\mW \bm{\Sigma} \mW^\top \right) &= \tr \left(\mW \mS \mW^\top \right) - \tr \left(\mW \mG \mW^\top \right) \\
    &= \tr \left(\mI_k \right) - \tr \left(\mW \mG \mW^\top \right) \\
    &= k - \tr \left(\mW \mG \mW^\top \right) \:.
\end{align}
Therefore, the SSL problem simplifies to:
\begin{align}
    \max_{\mW \in \mathbb{R}^{k \times d}} \quad & \tr{\left( \mW \mG \mW^\top \right)} \\
    \mathrm{s.t.} \quad & \mW \mS \mW^\top = \mathbf{I}_k \:.
\end{align}
 
To solve this constrained non-convex optimization problem, we rely on the KKT conditions that are necessary conditions for optimality. 

\paragraph{First-order condition.}
We introduce the Lagrangian:
\begin{align}
    \mathcal{L} 
    &= \tr{\left( \mW \mG \mW^\top \right)} - \tr{\left(\mathbf{\Lambda} \left( \mW \mS \mW^\top - \mathbf{I}_k \right) \right)} \:,
\end{align}
where \(\mathbf{\Lambda} = \mathrm{diag}(\lambda_1, \dots, \lambda_k)\) is the diagonal matrix of Lagrange multipliers. Taking the gradient of the Lagrangian with respect to \(\mW\) and setting it to zero yields:
\begin{align}
    \mW^\star \mG &= \mathbf{\Lambda} \mW^\star \mS \:.
\end{align}
$\mS$ is positive definite thus invertible and we can write the above as the following eigenvalue problem:
\begin{align}
    \mW^\star \mG \mS^{-1} &= \mathbf{\Lambda} \mW^\star \:.
\end{align}
Since \(\mS\) is positive definite, it admits a unique positive definite square root \(\mS^{1/2}\), and we can write:
\begin{align}
    \mG \mS^{-1} = \mS^{1/2} (\mS^{-1/2} \mG \mS^{-1/2}) \mS^{-1/2} \:.
\end{align}
The matrix \(\mS^{-1/2} \mG \mS^{-1/2}\) is symmetric thus admits an eigendecomposition:
\begin{align}
    \mS^{-1/2} \mG \mS^{-1/2} = \mQ \mathbf{\Omega} \mQ^\top,
\end{align}
where \(\mQ\) is orthogonal and \(\mathbf{\Omega}=\mathrm{diag}(\omega_1, ..., \omega_d)\) with $\omega_1 \geq ... \geq \omega_d$. Substituting this into the expression for \(\mG \mS^{-1}\), we obtain:
\begin{align}
    \mG \mS^{-1} = \mS^{1/2} \mQ \mathbf{\Omega} \mQ^\top \mS^{-1/2}.
\end{align}
Let \(\mP = \mS^{1/2} \mQ\). Then \(\mG \mS^{-1}\) admits the decomposition:
\begin{align}
    \mG \mS^{-1} = \mP \mathbf{\Omega} \mP^{-1}.
\end{align}
In order to maximize the objective, the optimal choice for $\mW^\star$ is to pick the eigenvectors associated with the $k$ largest eigenvalues of $\mG \mS^{-1}$.  Therefore, the rows of $\mW^\star$ lie in the span of $\{ \vp_1, \dots, \vp_k\}$. Precisely, there exists a matrix $\mU \in \R^{k \times k}$ such that 
\begin{align}
    \mW^\star = \mU \mP_k^\top = \mU \mQ_k^\top \mS^{-\tfrac12} 
\end{align}
where $\mP_k = (\vp_1, \dots, \vp_k)$ and $\mQ_k = (\vq_1, \dots, \vq_k)$.

\paragraph{Primal feasibility.}
Finally, solutions must satisfy the primal constraint: $\mW^\star\mS\mW^{\star \top} = \mI_k$.
Using the first-order condition and the fact that $\mQ_k^\top \mQ_k = \mI_k$, we obtain:
\begin{align}
    \mW^\star\mS\mW^{\star \top} = \mU \mQ_k^\top \mS^{-\tfrac12} \mS \mS^{-\tfrac12} \mQ_k \mU^\top = \mU \mQ_k^\top \mQ_k \mU^\top = \mU \mU^\top \:.
\end{align}
Therefore, to satisfy the condition $\mW^\star\mS\mW^{\star \top} = \mI_k$, it follows that $\mU$ is an orthogonal matrix.

\end{proof}

\subsection{Proof of \Cref{prop:ols_asymptotic}}\label{sec:proof_asymptotic_supervised}

\propOLSasymptotic*

\begin{proof}  
Let $\mQ = (\mQ_1 | \mQ_2)$ where $\mQ_1 \in \R^{d \times k}$ contains the first $k$ columns of $\mQ$ and $\mQ_2 \in \R^{d \times (d-k)}$ contains the remaining $d-k$ columns spanning the null directions of $\mX$. All columns of $\mX$ lie in the column space of $\mQ_1$ and have no component along $\mQ_2$ \ie $\kappa_i > 0$ for $i \in \integ{k}$ and $\kappa_i = 0$ for $i > k$. 
Formally, $\mX = \mX_1\mQ_1^\top$ where $\mX_1 = \mX \mQ_1 \in \R^{n \times k}$ has full column rank $k$ and $\mX \mQ_2 = \mathbf{0}$.

Recall that \emph{noise components} are orthogonal to the column span of $\mQ_1$. Indeed, $\bm{\Gamma} = \mQ\bm{\Lambda}_{\bm{\Gamma}}\mQ^\top$ with $\bm{\Lambda}_{\bm{\Gamma}} = \mathrm{diag}(\lambda^{\bm{\Gamma}}_{1}, \dots, \lambda^{\bm{\Gamma}}_{d})$ such that $\lambda^{\bm{\Gamma}}_{i} = 0$ for any $i \in \integ{k}$. Finally, $\lambda^{\bm{\Gamma}}_{i} > 0$ for all $i \in \integ{k+1:d}$.

\paragraph{Uncorrupted data.}
We consider the problem regularized by data augmentation given by \Cref{lemma:da_ridge}:
\begin{align}
   \min_{\mV \in \R^{\ell \times d}} \: \tfrac1n \|\mY - \mX\mV^\top \|_F^2 + \| \mV \|^2_{\bm{\Theta}+\alpha^2\bm{\Gamma}} \:.
\end{align}
Differentiating the objective with respect to $\mV$ and setting the gradient to zero leads to the first-order optimality condition:
\begin{align}
\tfrac1n (\mX\mV^{\star \top}-\mY)^\top\mX + \mV^\star (\bm{\Theta}+\alpha^2\bm{\Gamma}) = \bm{0}.
\end{align}
Given that the matrix $\mX^\top \mX + n (\bm{\Theta}+\alpha^2\bm{\Gamma})$ is invertible, the closed form solution is given by:
\begin{align}
    \mV^\star = \tfrac1n \mY^\top \mX \left(\tfrac1n \mX^\top \mX + \bm{\Theta}+\alpha^2\bm{\Gamma} \right)^{-1} \:.
\end{align}
Since $\mX = \mX_1 \mQ_1^\top$ it gives:
\begin{align}
    \mV^\star = \tfrac1n \mY^\top \mX_1 \left(\tfrac1n \mX_1^\top \mX_1 + \bm{\Lambda}_{\bm{\Theta}, 1} \right)^{-1} \mQ_1^\top
\end{align}
where $\bm{\Lambda}_{\bm{\Theta}, 1}=\mathrm{diag}(\lambda_{1}^{\bm{\Theta}},\dots,\lambda_{k}^{\bm{\Theta}})$ is the block of $\bm{\Lambda}_{\bm{\Theta}}$ corresponding to the subspace spanned by $\mQ_1$.

\paragraph{Corrupted data.}
We now consider the corrupted data $\widetilde{\mX} = \mX + \mN$ where $\mN = (\vn_1, \dots, \vn_n)^\top$. The $\{\vn_i\}_{i \in \integ{n}}$ are independent variables such that for any $i \in \integ{n}$, $\vn_i = \bm{\Gamma}^{\tfrac12} \vz_i$ where the $\{\bm{\vz}_i\}_{i \in \integ{n}}$ are independent $\mathcal{N}(\mathbf{0}, \mI_d)$ vectors.

Using \Cref{lemma:da_ridge} gives the following problem: 
\begin{align}
   \min_{\widetilde{\mV} \in \R^{\ell \times d}} \: \tfrac1n \|\mY - \widetilde{\mX}\widetilde{\mV}^\top \|_F^2 + \| \widetilde{\mV} \|^2_{\bm{\Theta}+\alpha^2\bm{\Gamma}} \:.
\end{align}
Splitting $\widetilde{\mV}$ into two parts: $\widetilde{\mV}_1 = \widetilde{\mV} \mQ_1$ and $\widetilde{\mV}_2 = \widetilde{\mV} \mQ_2$, one has that the optimal $\widetilde{\mV}_1^\star$ is identical to the one in the uncorrupted case \ie 
\begin{align}
    \widetilde{\mV}_1^\star = \tfrac1n \mY^\top \mX_1 \left(\tfrac1n \mX_1^\top \mX_1 + \bm{\Lambda}_{\bm{\Theta}, 1} \right)^{-1} \:.
\end{align}
We define $\mN_2 = \mN\mQ_2$.
For $\widetilde{\mV}_2$, canceling the gradient of the objective  we get:
\begin{align}
    \tfrac1n \widetilde{\mV}_2^\star \mN_2^\top \mN_2 - \tfrac1n \mY^\top \mN_2 + \widetilde{\mV}_2^\star (\bm{\Lambda}_{\bm{\Theta}, 2}+\alpha^2\bm{\Lambda}_{\bm{\Gamma}, 2}) = \mathbf{0} \:.
\end{align}
where $\bm{\Lambda}_{\bm{\Theta}, 2}=\mathrm{diag}(\lambda_{k+1}^{\bm{\Theta}},\dots,\lambda_{d}^{\bm{\Theta}})$ 
 and $\bm{\Lambda}_{\bm{\Gamma}, 2}=\mathrm{diag}(\lambda_{k+1}^{\bm{\Gamma}},\dots,\lambda_{d}^{\bm{\Gamma}})$.

\paragraph{a) Asymptotic $\alpha \to +\infty$.}
In this regime, the above condition is equivalent to:
\begin{align}
    \widetilde{\mV}_2^\star \bm{\Lambda}_{\bm{\Gamma}, 2} = \mathbf{0}
\end{align}
Since $\bm{\Lambda}_{\bm{\Gamma}, 2}$ is invertible, the first-order condition implies that $\widetilde{\mV}_2^\star = \mathbf{0}$.

\paragraph{b) Asymptotic $n \to +\infty$.}
 We obtain by the strong law of large numbers,  
\begin{align}
    \tfrac1n \mY^\top \mN_2 &\xrightarrow[\text{a.s.}]{} \mathbf{0} \quad \text{and} \quad \tfrac1n \mN_2^\top \mN_2 \xrightarrow[\text{a.s.}]{} \bm{\Lambda}_{\bm{\Gamma}, 2} \:.
\end{align}
Hence at the limit we have,
\begin{align}
    \widetilde{\mV}_2^\star (\bm{\Lambda}_{\bm{\Theta}, 2}+(1+\alpha^2) \bm{\Lambda}_{\bm{\Gamma}, 2}) = \mathbf{0} \:.
\end{align}
The matrix $\bm{\Lambda}_{\bm{\Theta}, 2}+(1+\alpha^2) \bm{\Lambda}_{\bm{\Gamma}, 2}$ is invertible for any $\alpha \geq 0$, thus it follows that $\widetilde{\mV}_2^\star = \mathbf{0}$.

\paragraph{Conclusion.}
Therefore, both in the large-sample limit for a fixed $\alpha$ and in the large $\alpha$ limit for a fixed sample size, the optimal coefficients along the $\mQ_2$-subspace vanish, \ie $\widetilde{\mV}_2^\star=\mathbf{0}$. Since $\widetilde{\mV}_1^\star$ coincides with $\mV_1^\star$, we recover the same solution as in the noiseless setting in both asymptotic regimes.
\end{proof}

\subsection{Proof of \Cref{theorem:impact_DA_reconstruction}}\label{sec:proof_asymptotic_reconstruction}

\theoremImpactDAreconstruction*

\begin{proof}
Using \Cref{prop:closed_form_reconstruction}, the closed-form solution for the encoder of the reconstruction SSL problem of \Cref{eq:reconstruction_problem} applied to $\mX$ takes the form 
$\mE^\star = \mT \mP_k^\top \left(\tfrac1n \mX^\top \mX + \bm{\Sigma} \right)^{-\tfrac12}$ where $\mT$ is any invertible matrix in $\R^{k \times k}$ and $\mP_k$ is the matrix containing the $k$ columns of $\mP$ associated with the $k$ largest singular values of the matrix $\tfrac1n \mX^\top \mX \left(\tfrac1n \mX^\top \mX + \bm{\Sigma} \right)^{-\tfrac12}$.
Given the construction of $\mathcal{T}(\alpha)$ (\Cref{sec:setup}), if follows that $\bm{\Sigma} = \bm{\Theta} + \alpha^2 \bm{\Gamma}$.

Let $\mQ = (\mQ_1 | \mQ_2)$ where $\mQ_1 \in \R^{d \times k}$ contains the $k$ columns of $\mQ$ corresponding to the \emph{important components} and $\mQ_2 \in \R^{d \times (d-k)}$ contains the remaining $d-k$ columns corresponding to the \emph{noise components}.
We denote $\bm{\Lambda}_{\bm{\Theta}, 1}=\mathrm{diag}(\lambda_{1}^{\bm{\Theta}},\dots,\lambda_{k}^{\bm{\Theta}})$, $\bm{\Lambda}_{\bm{\Theta}, 2}=\mathrm{diag}(\lambda_{k+1}^{\bm{\Theta}},\dots,\lambda_{d}^{\bm{\Theta}})$ and $\bm{\Lambda}_{\bm{\Gamma}, 2}=\mathrm{diag}(\lambda_{k+1}^{\bm{\Gamma}},\dots,\lambda_{d}^{\bm{\Gamma}})$.

\paragraph{Uncorrupted data.}
Decomposing $\tfrac1n \mX^\top \mX + \bm{\Sigma}$ and $\tfrac1n \mX^\top \mX$ in $(\mQ_1 | \mQ_2)$ gives:
\begin{align}
    \tfrac1n \mX^\top \mX + \bm{\Sigma} &= \mQ_1 \left(\tfrac1n \mK_1^2 + \bm{\Lambda}_{\bm{\Theta}, 1} \right) \mQ_1^\top + \mQ_2 \left(\bm{\Lambda}_{\bm{\Theta},2} + \alpha^2 \bm{\Lambda}_{\bm{\Gamma},2} \right) \mQ_2^\top \\
    \tfrac1n \mX^\top \mX &= \tfrac1n \mQ_1 \mK_1^2 \mQ_1^\top \:.
\end{align}
Then using the same reasoning as in the proof of \Cref{theorem:impact_DA} (\Cref{sec:proof_asymptotic_je}) for the uncorrupted data case, we have that the eigenvalues of $\tfrac1n \mX^\top \mX \left(\tfrac1n \mX^\top \mX + \bm{\Sigma} \right)^{-\tfrac12}$ on the \emph{noise components} $\mQ_2$ are all null and they are strictly positive on the \emph{important components}.
Therefore the largest eigenvalues are found on the \emph{important components} $\mQ_1$ and we obtain $\mP_k = \mQ_1$.

The solution of the reconstruction SSL problem is then given by
\begin{align}
    \mE^\star = \mT \left(\tfrac1n \mK_1^2 + \bm{\Lambda}_{\bm{\Theta}, 1} \right)^{-\tfrac12} \mQ_1^\top
\end{align} 
where $\mT$ is any invertible matrix of size $k \times k$.

\paragraph{Corrupted data.}
Decomposing $\tfrac1n \widetilde{\mX}^\top \widetilde{\mX} + \bm{\Sigma}$ and $\tfrac1n \widetilde{\mX}^\top \widetilde{\mX}$ in $(\mQ_1 | \mQ_2)$ gives:
\begin{align}
    \tfrac1n \widetilde{\mX}^\top \widetilde{\mX} + \bm{\Sigma} &= \mQ_1 \left(\tfrac1n \mK_1^2 + \bm{\Lambda}_{\bm{\Theta}, 1} \right) \mQ_1^\top + \mQ_2 \left(\tfrac1n \mN_2^\top \mN_2 + \bm{\Lambda}_{\bm{\Theta},2} + \alpha^2 \bm{\Lambda}_{\bm{\Gamma},2} \right) \mQ_2^\top \\
    \tfrac1n \widetilde{\mX}^\top \widetilde{\mX} &= \tfrac1n \mQ_1 \mK_1^2 \mQ_1^\top +  \tfrac1n \mQ_2 \mN_2^\top \mN_2 \mQ_2^\top \:.
\end{align}

\paragraph{a) Asymptotic $\alpha \to +\infty$.}
In this asymptotic regime the term $\alpha^2 \bm{\Lambda}_{\bm{\Gamma},2}$ dominates thus 
\begin{align}
    \mQ_2^\top \tfrac1n \widetilde{\mX}^\top \widetilde{\mX} \left(\tfrac1n \widetilde{\mX}^\top \widetilde{\mX} + \bm{\Sigma} \right)^{-\tfrac12} \mQ_2 \: \to \: \bm{0} \:.
\end{align}
Therefore the singular values of $\tfrac1n \widetilde{\mX}^\top \widetilde{\mX} \left(\tfrac1n \widetilde{\mX}^\top \widetilde{\mX} + \bm{\Sigma} \right)^{-\tfrac12}$ on the \emph{noise components} $\mQ_2$ all converge to $0$ almost surely.
On the \emph{important components} $\mQ_1$, the smallest singular value is positive. Hence at the limit $\alpha \to +\infty$, we have $\mP_k = \mQ_1$.

\paragraph{b) Asymptotic $n \to +\infty$.}
Again using the strong law of large numbers we obtain  
\begin{align}
    \tfrac1n \mN_2^\top \mN_2 \xrightarrow[\text{a.s.}]{} \bm{\Lambda}_{\bm{\Gamma}, 2} \:.
\end{align}
We denote by
\begin{align}
    \eta = \min_{i \in \integ{k}} \: \frac{\tfrac1n \kappa_i^2}{\sqrt{\tfrac1n \kappa_i^2 + \lambda^{\bm{\Theta}}_i}} \:.
\end{align}
In this asymptotic regime, ensuring that all eigenvalues of $\widetilde{\mX}^\top \widetilde{\mX} \left(\tfrac1n \widetilde{\mX}^\top \widetilde{\mX} + \bm{\Sigma} \right)^{-\tfrac12}$ on \emph{noise components} $\mQ_2$ are smaller than eigenvalues on \emph{important components} $\mQ_1$ boils down to
\begin{align}
    \forall i \in \integ{k+1: d}, \quad  \eta \: > \: \frac{\lambda^{\bm{\Gamma}}_i}{\sqrt{\lambda^{\bm{\Theta}}_i + (1 + \alpha^2) \lambda^{\bm{\Gamma}}_i}} \:.
\end{align}
Rearranging this inequality gives
\begin{align}\label{eq:condition_alpha_proof}
    \alpha^2 \: > \: \max_{i \in \integ{k+1: d}} \: \frac{\lambda^{\bm{\Gamma}}_i}{\eta^2} - \frac{\lambda^{\bm{\Theta}}_i}{\lambda^{\bm{\Gamma}}_i} - 1 \:.
\end{align}
Thus, if this condition is satisfied, we obtain $\mP_k = \mQ_1$.

\paragraph{Conclusion.}
In the large $\alpha$ limit for a fixed sample size, and in the large-sample limit under the condition of \Cref{eq:cond_gamma_reconstruction}, the SSL reconstruction problem is solved by
\begin{align}
    \widetilde{\mE}^\star = \mT \left(\tfrac1n \mK_1^2 + \bm{\Lambda}_{\bm{\Theta}, 1} \right)^{-\tfrac12} \mQ_1^\top = \mE^\star \:,
\end{align}
where $\mT$ is any invertible matrix of size $k \times k$.

\end{proof}

\subsection{Proof of \Cref{theorem:impact_DA}}\label{sec:proof_asymptotic_je}

\theoremImpactDA*

\begin{proof}
Let $\mQ = (\mQ_1 | \mQ_2)$ where $\mQ_1 \in \R^{d \times k}$ contains the $k$ columns of $\mQ$ corresponding to the \emph{important components} and $\mQ_2 \in \R^{d \times (d-k)}$ contains the remaining $d-k$ columns corresponding to the \emph{noise components}.

Using \Cref{prop:closed_form_ssl}, the closed-form solution to the joint-embedding SSL problem of \Cref{eq:ssl} applied to $\mX$ takes the form 
$\mW^\star = \mU \mQ_k^\top \mS^{-\tfrac12}$,
where $\mU$ is any orthogonal matrix of size $k \times k$.
Recall that $\mQ_k$ contains the $k$ columns of $\mQ$ associated with the $k$ eigenvectors with largest eigenvalues of the matrix $\mS^{-\tfrac12} \mG \mS^{-\tfrac12}$.
Given the construction of $\mathcal{T}(\alpha)$ (\Cref{sec:setup}), if follows that $\mS = \tfrac1n \mX^\top \mX 
+ \bm{\Theta} + \alpha^2 \bm{\Gamma}$ and $\mG = \tfrac1n \mX^\top \mX$.

We denote $\bm{\Lambda}_{\bm{\Theta}, 1}=\mathrm{diag}(\lambda_{1}^{\bm{\Theta}},\dots,\lambda_{k}^{\bm{\Theta}})$, $\bm{\Lambda}_{\bm{\Theta}, 2}=\mathrm{diag}(\lambda_{k+1}^{\bm{\Theta}},\dots,\lambda_{d}^{\bm{\Theta}})$ and $\bm{\Lambda}_{\bm{\Gamma}, 2}=\mathrm{diag}(\lambda_{k+1}^{\bm{\Gamma}},\dots,\lambda_{d}^{\bm{\Gamma}})$.

\paragraph{Uncorrupted data.}
Decomposing $\mS$ and $\mG$ in $(\mQ_1 | \mQ_2)$ gives:
\begin{align}
    \mS &= \mQ_1 \left(\tfrac1n \mK_1^2 + \bm{\Lambda}_{\bm{\Theta}, 1} \right) \mQ_1^\top + \mQ_2 \left(\bm{\Lambda}_{\bm{\Theta},2} + \alpha^2 \bm{\Lambda}_{\bm{\Gamma},2} \right) \mQ_2^\top \\
    \mG &= \tfrac1n \mQ_1 \mK_1^2 \mQ_1^\top \:.
\end{align}
It holds
\begin{align}
    \mQ_1^\top \mS^{-\tfrac12} \mG \mS^{-\tfrac12} \mQ_2 &= \bm{0}_{d-k} \:.
\end{align}
Hence on the \emph{noise components} $\mQ_2$, the eigenvalues of $\mS^{-\tfrac12} \mG \mS^{-\tfrac12}$ are all null.

On the \emph{important components} $\mQ_1$, the smallest eigenvalue of $\mS^{-\tfrac12} \mG \mS^{-\tfrac12}$ satisfies
\begin{align}
    \max_{i \in \integ{k}} \: \tfrac1n \kappa_i^2 \left( \tfrac1n \kappa_i^2 + \lambda^{\bm{\Theta}}_i \right)^{-1} \: > \: 0 \:,
\end{align}
since $\kappa_i > 0$ for any $i \in \integ{k}$.
Therefore in the clean data setting, the largest eigenvalues of $\mS^{-\tfrac12} \mG \mS^{-\tfrac12}$ are found on the \emph{important component} thus it follows that $\mQ_k = \mQ_1$.

The solution of the joint-embedding SSL problem is then given by
\begin{align}
    \mW^\star = \mU \left(\tfrac1n \mK_1^2 + \bm{\Lambda}_{\bm{\Theta}, 1} \right)^{-\tfrac12} \mQ_1^\top
\end{align} 
where $\mU$ is any orthogonal matrix of size $k \times k$.

\paragraph{Corrupted data.}
Let us denote $\mN_2 = \mN \mQ_2$ with $\mN$ being a $n \times d$ matrix whose rows are $\vn_i \in \mathbb{R}^d$, where each $\vn_i$ is drawn independently from $\mathcal{N}(\bm{0}, \bm{\Gamma})$.

We define 
\begin{align}
    \widetilde{\mS} &\coloneqq \frac{1}{n} \sum_{i \in \integ{n}} \mathbb{E}_{\tau \sim \mathcal{T}} \left[ \tau(\tilde{\vx}_i) \tau(\tilde{\vx}_i)^\top\right] \:, \\
    \widetilde{\mG} &\coloneqq \frac{1}{n} \sum_{i \in \integ{n}} \mathbb{E}_{\tau \sim \mathcal{T}} \left[\tau(\tilde{\vx}_i) \right] \mathbb{E}_{\tau \sim \mathcal{T}} \left[\tau(\tilde{\vx}_i) \right]^\top
\end{align}
which is the equivalent of matrix $\mS$ when replacing the clean dataset $\mX$ by the noisy dataset $\widetilde{\mX}$. Decomposing $\widetilde{\mS}$ and $\widetilde{\mG}$ in $(\mQ_1 | \mQ_2)$ gives:
\begin{align}
    \widetilde{\mS} &= \mQ_1 \left(\tfrac1n \mK_1^2 + \bm{\Lambda}_{\bm{\Theta}, 1} \right) \mQ_1^\top + \mQ_2 \left(\tfrac1n \mN_2^\top \mN_2 + \bm{\Lambda}_{\bm{\Theta},2} + \alpha^2 \bm{\Lambda}_{\bm{\Gamma},2} \right) \mQ_2^\top \\
    \widetilde{\mG} &= \tfrac1n \mQ_1 \mK_1^2 \mQ_1^\top + \tfrac1n \mQ_2 \mN_2^\top \mN_2 \mQ_2^\top \:.
\end{align}

\paragraph{a) Asymptotic $\alpha \to +\infty$.}
In this asymptotic regime the term $\alpha^2 \bm{\Lambda}_{\bm{\Gamma},2}$ dominates thus 
\begin{align}
    \mQ_2^\top \widetilde{\mS}^{-\tfrac12} \widetilde{\mG} \widetilde{\mS}^{-\tfrac12} \mQ_2 \: \to \: \bm{0} \:.
\end{align}
Therefore the eigenvalues of $\widetilde{\mS}^{-\tfrac12} \widetilde{\mG} \widetilde{\mS}^{-\tfrac12}$ on the \emph{noise components} $\mQ_2$ all converge to $0$ almost surely.
On the \emph{important components} $\mQ_1$, using the same argument as in the above uncorrupted data case, the smallest eigenvalue is strictly greater than $0$. 
Therefore at the limit $\alpha \to +\infty$, the largest eigenvalues are found on the \emph{important components} and we obtain $\mQ_k = \mQ_1$.

\paragraph{b) Asymptotic $n \to +\infty$.}
By the strong law of large numbers,  
\begin{align}
    \tfrac1n \mN_2^\top \mN_2 \xrightarrow[\text{a.s.}]{} \bm{\Lambda}_{\bm{\Gamma}, 2} \:.
\end{align}
Let us denote
\begin{align}
    \delta = \min_{i \in \integ{k}} \: \frac{\tfrac1n \kappa_i^2}{\tfrac1n \kappa_i^2 + \lambda^{\bm{\Theta}}_i} \:.
\end{align}
In this asymptotic regime, ensuring that all eigenvalues of $\widetilde{\mS}^{-\tfrac12} \widetilde{\mG} \widetilde{\mS}^{-\tfrac12}$ on \emph{noise components} $\mQ_2$ are smaller than eigenvalues on \emph{important components} $\mQ_1$ gives the condition:
\begin{align}
    \forall i \in \integ{k+1: d}, \quad  \delta \: > \: \frac{\lambda^{\bm{\Gamma}}_i}{\lambda^{\bm{\Theta}}_i + (1 + \alpha^2) \lambda^{\bm{\Gamma}}_i} \:.
\end{align}
Rearranging this inequality, we obtain
\begin{align}\label{eq:condition_gamma_proof}
    \alpha^2 \: > \: \max_{i \in \integ{k+1: d}} \: \frac{1-\delta}{\delta} - \frac{\lambda^{\bm{\Theta}}_i}{\lambda^{\bm{\Gamma}}_i} \:.
\end{align}
Thus, if this condition is satisfied, the eigenvalues corresponding to the noise components are strictly smaller than those on the important components. Consequently, the $k$ largest eigenvalues of $\widetilde{\mS}^{-\tfrac12} \widetilde{\mG} \widetilde{\mS}^{-\tfrac12}$ come solely from the data subspace \ie $\mQ_k = \mQ_1$.

\paragraph{Conclusion.}
Both in the large $\alpha$ limit for a fixed sample size, and in the large-sample limit under the condition given by \Cref{eq:cond_gamma}, we obtain that the optimal solution of the SSL problem takes the form:
\begin{align}
    \widetilde{\mW}^\star = \mU \left(\tfrac1n \mK_1^2 + \bm{\Lambda}_{\bm{\Theta}, 1} \right)^{-\tfrac12} \mQ_1^\top = \mW^\star \:,
\end{align} 
where $\mU$ is any orthogonal matrix of size $k \times k$.
Therefore, in these two regimes the solution has the same form as in the uncorrupted data setting.

\end{proof}

\subsection{Proof of \Cref{cor:comparison_je_vs_reconstruction}}\label{sec:proof_comparison_je_vs_reconstruction}

\corollarynoise*

\begin{proof}
Recall the definition of $\alpha_{\mathrm{JE}}$ and $\alpha_{\mathrm{RC}}$:
\begin{align}
    \alpha_{\mathrm{JE}}^2 &\coloneqq \max_{i \in \integ{k+1: d}} \: \frac{1-\delta}{\delta} - \frac{\lambda^{\bm{\Theta}}_i}{\lambda^{\bm{\Gamma}}_i} \quad \text{where} \quad \delta = \min_{i \in \integ{k}} \: \frac{\frac{1}{n} \kappa_i^2}{\frac{1}{n} \kappa_i^2 + \lambda^{\bm{\Theta}}_i} \:, \\
    \alpha_{\mathrm{RC}}^2 &\coloneqq \max_{i \in \integ{k+1: d}} \: \frac{\lambda^{\bm{\Gamma}}_i}{\eta^2} - \frac{\lambda^{\bm{\Theta}}_i}{\lambda^{\bm{\Gamma}}_i} - 1 \quad \text{where} \quad \eta = \min_{i \in \integ{k}} \: \frac{\frac{1}{n} \kappa_i^2}{\sqrt{\frac{1}{n} \kappa_i^2 + \lambda^{\bm{\Theta}}_i}} \:.
\end{align}
A sufficient condition for $\alpha_{\mathrm{JE}} > \alpha_{\mathrm{RC}}$ is the following, for any $i \in \integ{k+1: d}$:
\begin{align}
    \frac{1-\delta}{\delta} > \frac{\lambda^{\bm{\Gamma}}_i}{\eta^2} - 1 \:.
\end{align}
Rearranging this inequality gives:
\begin{align}
    \forall i \in \integ{k+1: d}, \quad \lambda^{\bm{\Gamma}}_i < \frac{\eta^2}{\delta} \:.
\end{align}
Then, a sufficient condition for $\alpha_{\mathrm{JE}} < \alpha_{\mathrm{RC}}$ is the following inequality, for any $i \in \integ{k+1: d}$:
\begin{align}
    \frac{1-\delta}{\delta} < \frac{\lambda^{\bm{\Gamma}}_i}{\eta^2} - 1 \:.
\end{align}
It gives:
\begin{align}
    \forall i \in \integ{k+1: d}, \quad \lambda^{\bm{\Gamma}}_i > \frac{\eta^2}{\delta} \:.
\end{align}

\end{proof}

\section{Effect of Data Augmentation for Supervised Learning}\label{sec:da_ridge}

We recall a well-known result that establishes the equivalence between the effect of data augmentation and ridge regularization. We provide the proof for completeness.
\begin{lemma}\label{lemma:da_ridge}\cite{balestriero2022data, lin2024good}
For any $\mV \in \R^{\ell \times d}$, it holds:
\begin{align}
    \frac{1}{n} \sum_{i \in \integ{n}} \mathbb{E}_{\tau \sim \mathcal{T}} \left[ \|\vy_i - \mV\tau(\vx_i) \|_2^2 \right] = \| \mV \|^2_{\bm{\Sigma}} + \frac{1}{n} \sum_{i \in \integ{n}} \|\vy_i - \mV \mathbb{E}_{\tau \sim \mathcal{T}} \left[ \tau(\vx_i) \right] \|_2^2 \:,
\end{align}
where
\begin{align}
    \bm{\Sigma} &\coloneqq \frac{1}{n} \sum_{i \in \integ{n}} \mathbb{E}_{\tau \sim \mathcal{T}} \left[ \tau(\vx_i) \tau(\vx_i)^\top\right] - \mathbb{E}_{\tau \sim \mathcal{T}} \left[\tau(\vx_i) \right] \mathbb{E}_{\tau \sim \mathcal{T}} \left[\tau(\vx_i) \right]^\top \:.
\end{align}
\end{lemma}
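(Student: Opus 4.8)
The plan is to prove the identity sample by sample and then average over $i \in \integ{n}$. Fix $i$ and write $\bm{\mu}_i \coloneqq \mathbb{E}_{\tau \sim \mathcal{T}}[\tau(\vx_i)]$ for the mean augmented point. The starting observation is the algebraic splitting
\begin{align}
    \vy_i - \mV\tau(\vx_i) = \bigl(\vy_i - \mV\bm{\mu}_i\bigr) - \mV\bigl(\tau(\vx_i) - \bm{\mu}_i\bigr),
\end{align}
which is where linearity of $f_\mV$ is used: $\mV$ factors out of the fluctuation term so that the residual decomposes into a deterministic ``bias'' piece plus a zero-mean ``variance'' piece.

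Next I would expand the squared Euclidean norm of the right-hand side and take $\mathbb{E}_{\tau \sim \mathcal{T}}$. The cross term equals $-2\,(\vy_i - \mV\bm{\mu}_i)^\top \mV\,\mathbb{E}_{\tau \sim \mathcal{T}}[\tau(\vx_i) - \bm{\mu}_i]$, which vanishes by the very definition of $\bm{\mu}_i$, leaving
\begin{align}
    \mathbb{E}_{\tau \sim \mathcal{T}}\left[\|\vy_i - \mV\tau(\vx_i)\|_2^2\right] = \|\vy_i - \mV\bm{\mu}_i\|_2^2 + \mathbb{E}_{\tau \sim \mathcal{T}}\left[\|\mV(\tau(\vx_i) - \bm{\mu}_i)\|_2^2\right].
\end{align}
For the last term I would write $\|\mV(\tau(\vx_i)-\bm{\mu}_i)\|_2^2 = \tr\bigl(\mV(\tau(\vx_i)-\bm{\mu}_i)(\tau(\vx_i)-\bm{\mu}_i)^\top\mV^\top\bigr)$ and pull the expectation inside the trace by linearity, obtaining $\tr\bigl(\mV\,\mathbb{E}_{\tau \sim \mathcal{T}}[(\tau(\vx_i)-\bm{\mu}_i)(\tau(\vx_i)-\bm{\mu}_i)^\top]\,\mV^\top\bigr)$, i.e.\ $\mV$ sandwiching the per-sample augmentation covariance.

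Finally I would average over $i \in \integ{n}$: the bias terms assemble into $\tfrac1n\sum_i \|\vy_i - \mV\mathbb{E}_{\tau}[\tau(\vx_i)]\|_2^2$, and the variance terms give $\tr\bigl(\mV\,\bigl(\tfrac1n\sum_i \mathbb{E}_{\tau}[(\tau(\vx_i)-\bm{\mu}_i)(\tau(\vx_i)-\bm{\mu}_i)^\top]\bigr)\,\mV^\top\bigr)$. Expanding $\mathbb{E}_\tau[(\tau(\vx_i)-\bm{\mu}_i)(\tau(\vx_i)-\bm{\mu}_i)^\top] = \mathbb{E}_\tau[\tau(\vx_i)\tau(\vx_i)^\top] - \bm{\mu}_i\bm{\mu}_i^\top$ identifies the inner average as precisely $\bm{\Sigma}$ from \Cref{eq:cov}, so the variance contribution is $\tr(\mV\bm{\Sigma}\mV^\top) = \|\mV\|_{\bm{\Sigma}}^2$, which is the claim. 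There is no genuine obstacle here — this is the textbook bias–variance decomposition — and the only points requiring care are routine matrix/trace bookkeeping and invoking linearity of $f_\mV$ at the splitting step (a nonlinear model would instead leave a Jensen-type residual rather than a clean Tikhonov penalty).
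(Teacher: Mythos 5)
Your proof is correct and takes essentially the same route as the paper: both are the standard bias--variance decomposition for the augmented quadratic loss, the only cosmetic difference being that you center around $\bm{\mu}_i = \mathbb{E}_\tau[\tau(\vx_i)]$ first and let the cross term vanish, whereas the paper expands $\|\vy_i - \mV\tau(\vx_i)\|_2^2$ directly and regroups at the end. The trace manipulations and the final identification of $\tfrac1n\sum_i \mathbb{E}_\tau[(\tau(\vx_i)-\bm{\mu}_i)(\tau(\vx_i)-\bm{\mu}_i)^\top]$ with $\bm{\Sigma}$ are identical.
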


\begin{proof}
\begin{align}
    &\frac{1}{n} \sum_{i \in \integ{n}} \mathbb{E}_{\tau \sim \mathcal{T}} \left[ \|\vy_i - \mV\tau(\vx_i) \|_2^2 \right] \\
    = \: &\frac{1}{n} \sum_{i \in \integ{n}} \|\vy_i \|_2^2 + \mathbb{E}_{\tau \sim \mathcal{T}} \left[ \| \mV\tau(\vx_i) \|_2^2 \right] - 2 \mathbb{E}_{\tau \sim \mathcal{T}} \left[ \mathrm{Tr}\left( \vy_i^\top \mV \tau(\vx_i) \right) \right] \\
    = \: &\frac{1}{n} \sum_{i \in \integ{n}} \|\vy_i \|_2^2 + \mathrm{Tr}\left(\mV \mathbb{E}_{\tau \sim \mathcal{T}} \left[ \tau(\vx_i) \tau(\vx_i)^\top \right]\mV^\top \right)  - 2 \mathrm{Tr}\left( \vy_i^\top \mV \mathbb{E}_{\tau \sim \mathcal{T}} \left[ \tau(\vx_i) \right] \right) \\
    = \: &\mathrm{Tr}\left(\mV \bm{\Sigma} \mV^\top \right) + \frac{1}{n} \sum_{i \in \integ{n}} \|\vy_i \|_2^2 - 2 \mathrm{Tr}\left( \vy_i^\top \mV \mathbb{E}_{\tau \sim \mathcal{T}} \left[ \tau(\vx_i) \right] \right) + \| \mV \mathbb{E}_{\tau \sim \mathcal{T}} \left[ \tau(\vx_i) \right] \|_2^2 \\
    = \: &\| \mV \|^2_{\bm{\Sigma}} + \frac{1}{n} \sum_{i \in \integ{n}} \|\vy_i - \mV \mathbb{E}_{\tau \sim \mathcal{T}} \left[ \tau(\vx_i) \right] \|_2^2 \:.
\end{align}
\end{proof}

\section{Experiments with Linear Models}\label{sec:exp_linear_models}

In this section, we experiment with linear models and synthetic noise in order to validate the theoretical results of \Cref{sec:theory}.

To process the data, we apply a PCA of dimension 50. We then add 50 components of noise to create a corrupted version of the dataset.

The eigenvalues for $\bm{\Gamma}$ and $\bm{\Theta}$ (as defined in \Cref{sec:setup}) are randomly sampled from uniform distributions. Specifically, eigenvalues for $\bm{\Gamma}$ are drawn from the range $[0, \lambda^{\bm{\Gamma}}_{\mathrm{max}}]$, and eigenvalues for $\bm{\Theta}$ are drawn from $[0, \lambda^{\bm{\Theta}}_{\mathrm{max}}]$.

Across all experiments, we set a constant value for $\lambda^{\bm{\Theta}}_{\mathrm{max}} = 10^4$. To investigate the impact of input noise levels on model performance, we vary the value of $\lambda^{\bm{\Gamma}}_{\mathrm{max}}$. In the experiments presented in \Cref{fig:linear_model_main}, we define a weak noise case with $\lambda^{\bm{\Gamma}}_{\mathrm{max}} = 10^3$ and a strong noise case with $\lambda^{\bm{\Gamma}}_{\mathrm{max}} = 10^6$.

Joint-embedding and reconstruction solutions are obtained using the closed forms provided in \Cref{prop:closed_form_ssl,prop:closed_form_reconstruction}. For evaluation, we compute the supervised linear probing score:
\begin{align}
    \min_{\mV \in \mathbb{R}^{\ell \times k}} \: \frac{1}{n} \sum_{i \in \integ{n}} \| \vy_i - \mV \vz_i \|_2^2 \:,
\end{align}
where $\vz_i$ is the output of the SSL model on the $i$-th sample; \ie $\vz_i = \mW^\star \vx_i$ for joint embedding where $\mW^\star$ is the optimal joint-embedding model of \Cref{prop:closed_form_ssl} and $\vz_i = \mE^\star \vx_i$ for reconstruction where $\mE^\star$ is the optimal encoder of the reconstruction model of \Cref{prop:closed_form_reconstruction}.
We then compute the absolute difference between the score of the model trained on clean data (\ie, composed only of \emph{important features}), and the score of the model trained on corrupted data (with the added \emph{irrelevant noisy features}). 
As the model trained only on the \emph{important features} naturally selects these features as SSL representations, this absolute difference directly quantifies the model's ability to filter out the \emph{irrelevant noisy features}.

Figures \ref{fig:mnist_linear_models}, \ref{fig:fashion_mnist_linear_models}, \ref{fig:Kuzushiji_linear_models}, and \ref{fig:macosko_linear_models} illustrate the experimental results, all of which support the intuitions outlined in Section \ref{sec:exp_linear_models_main}. Notably, supervised models consistently discard noisy irrelevant components with increasing sample size or alignment strength, confirming the findings of Section \ref{sec:proof_asymptotic_supervised}. However, SSL models demonstrate varied success depending on the setting. The \emph{reconstruction} SSL model fails to retrieve \emph{important components} from data with high noise magnitude, even with larger sample sizes and important alignment strength. Yet, with low noise, it successfully identifies these components and remains robust to alignment strength. In contrast, the \emph{joint-embedding} SSL model requires a certain minimum alignment strength to filter out noisy irrelevant components, even in low-noise settings, but it exhibits strong robustness to increasing noise magnitude (bottom figures).

\begin{figure}[H]
    \centering
    \includegraphics[width=\linewidth]{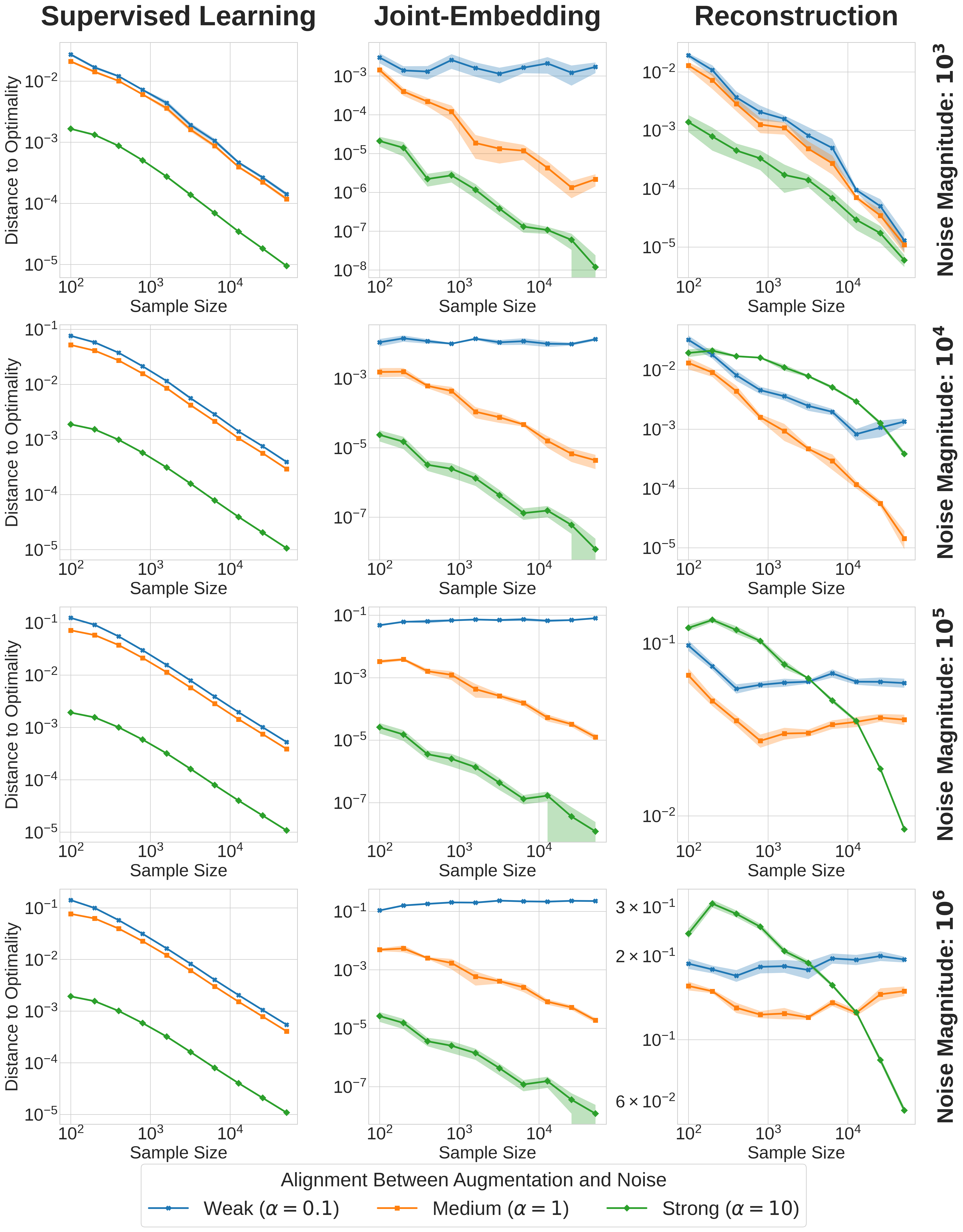}
    \caption[Performance on MNIST (linear models)]{Performance of linear supervised and SSL models (\Cref{sec:reconstruction_ssl,sec:je_ssl,prop:closed_form_reconstruction,prop:closed_form_ssl}) with synthetic noise (\Cref{sec:setup,sec:exp_linear_models}) on \textbf{MNIST}.
Each subplot's y-axis is the absolute difference of supervised linear probing loss (on clean vs. corrupted data) and its x-axis is the sample size $n$.}
    \label{fig:mnist_linear_models}
\end{figure}

\begin{figure}[H]
    \centering
    \includegraphics[width=\linewidth]{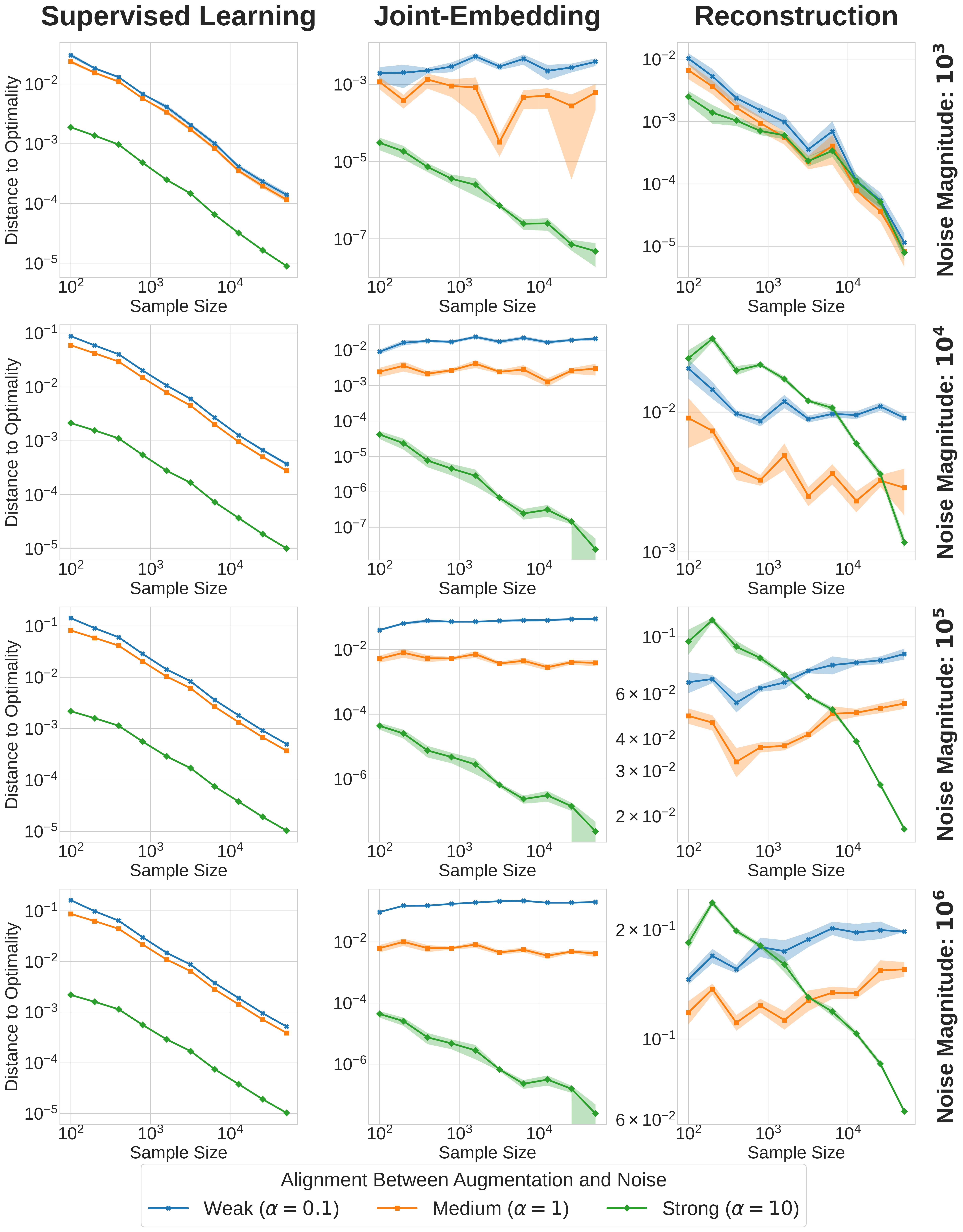}
    \caption[Performance on Fashion-MNIST (linear models)]{Performance of linear supervised and SSL models (\Cref{sec:reconstruction_ssl,sec:je_ssl,prop:closed_form_reconstruction,prop:closed_form_ssl}) with synthetic noise (\Cref{sec:setup,sec:exp_linear_models}) on \textbf{Fashion-MNIST}.
Each subplot's y-axis is the absolute difference of supervised linear probing loss (on clean vs. corrupted data) and its x-axis is the sample size $n$.}
    \label{fig:fashion_mnist_linear_models}
\end{figure}

\begin{figure}[H]
    \centering
    \includegraphics[width=\linewidth]{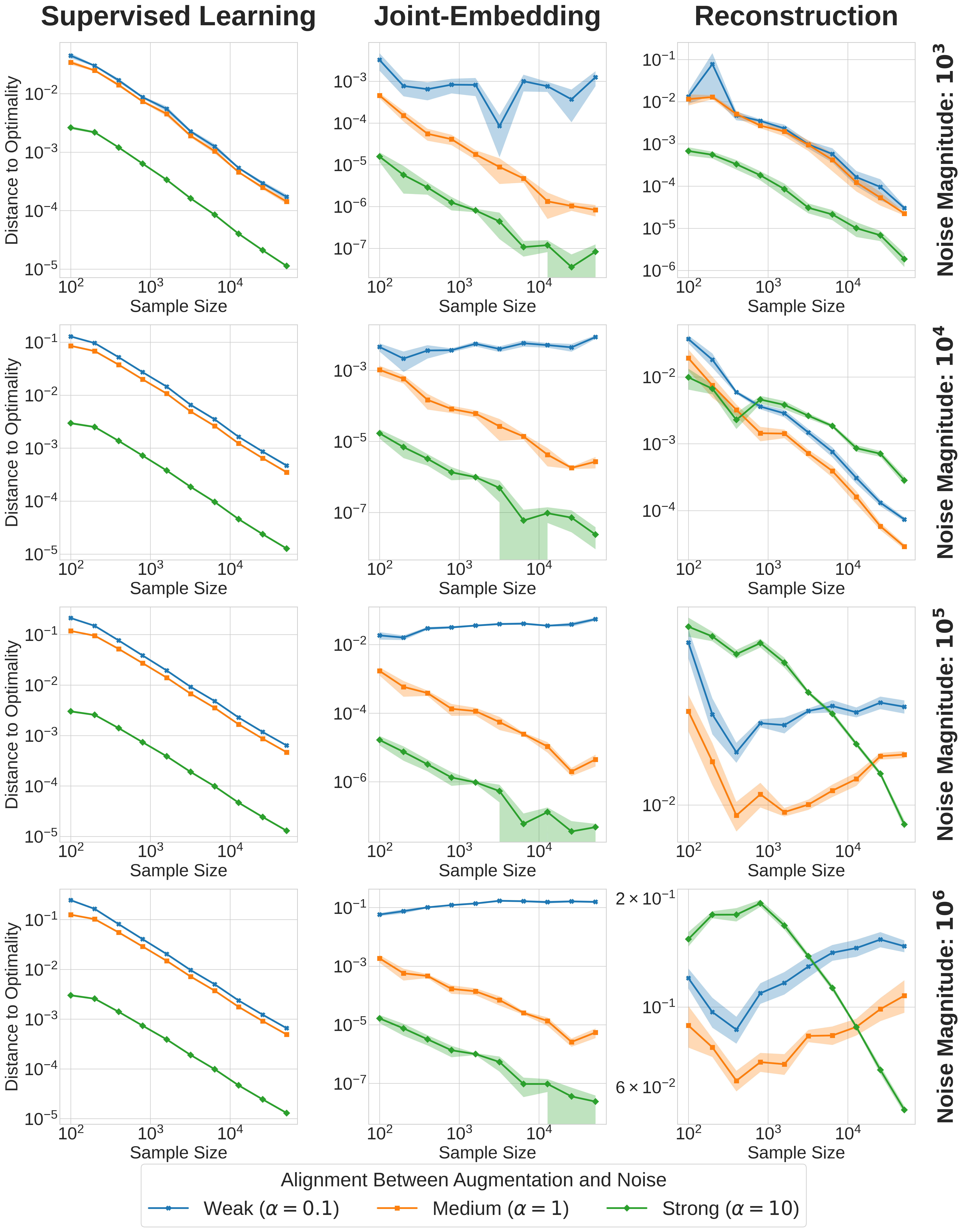}
    \caption[Performance on Kuzushiji-MNIST (linear models)]{Performance of linear supervised and SSL models (\Cref{sec:reconstruction_ssl,sec:je_ssl,prop:closed_form_reconstruction,prop:closed_form_ssl}) with synthetic noise (\Cref{sec:setup,sec:exp_linear_models}) on \textbf{Kuzushiji-MNIST} characters \cite{clanuwat2018deep}. Each subplot's y-axis is the absolute difference of supervised linear probing loss (on clean vs. corrupted data) and its x-axis is the sample size $n$.}
    \label{fig:Kuzushiji_linear_models}
\end{figure}

\begin{figure}[H]
    \centering
    \includegraphics[width=\linewidth]{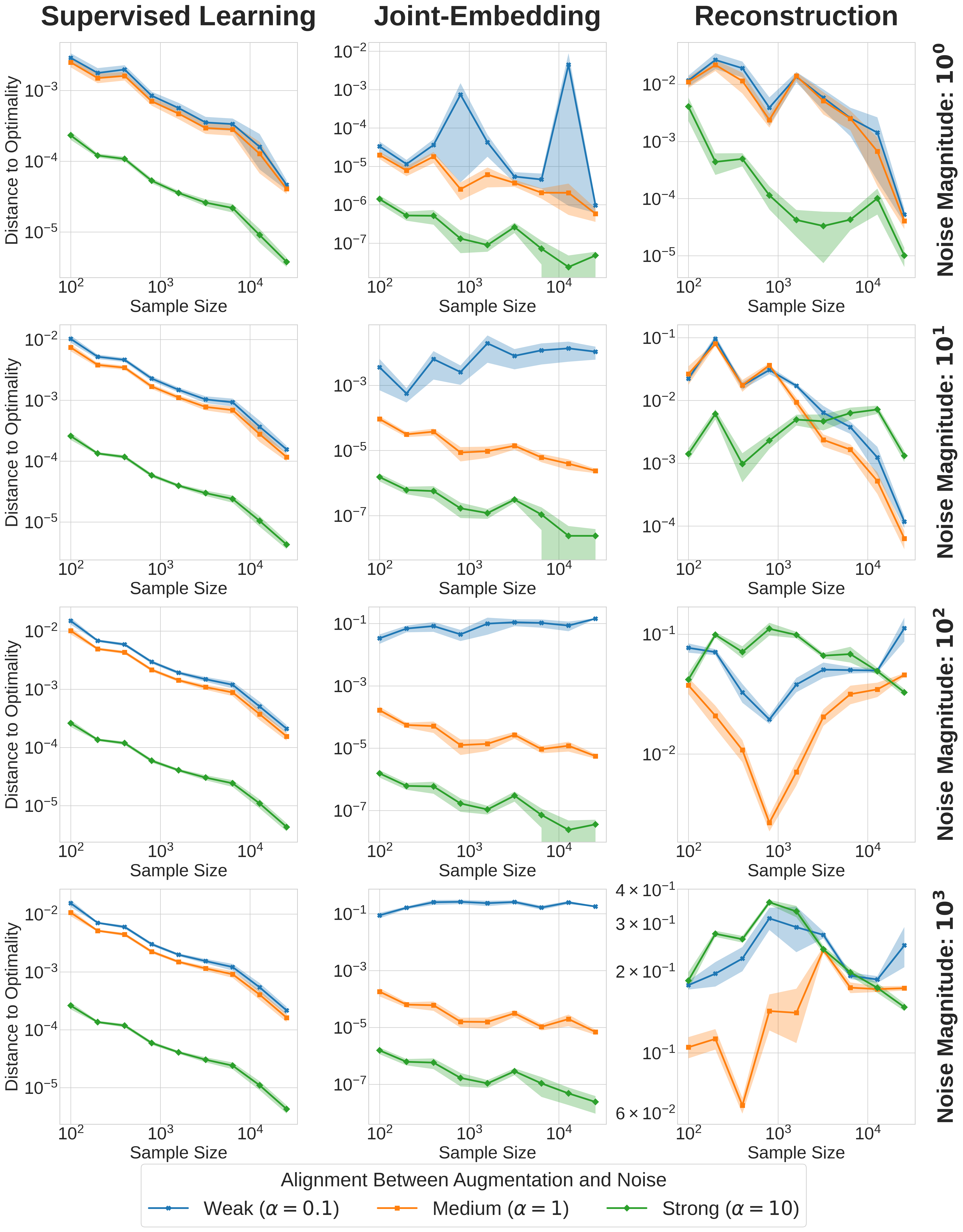}
    \caption[Performance on scRNA-seq (linear models)]{Performance of linear supervised and SSL models (\Cref{sec:reconstruction_ssl,sec:je_ssl,prop:closed_form_reconstruction,prop:closed_form_ssl}) with synthetic noise (\Cref{sec:setup,sec:exp_linear_models}) on a \textbf{single-cell RNA seq} dataset from \cite{macosko2015highly}. Each subplot's y-axis is the absolute difference of supervised linear probing loss (on clean vs. corrupted data) and its x-axis is the sample size $n$.}
    \label{fig:macosko_linear_models}
\end{figure}

\newpage
\section{Experiments with Deep Networks on Image Classification}\label{sec:exp_image_classification}

\subsection{Details about the Experiments}\label{sec:exp_details}

This section details the experimental setup and hyperparameters used in the experiments.

For the evaluation of SSL methods, we freeze the learned representations and then conduct linear probing, reporting the top-1 accuracy on the test set.
For the data augmentation pipelines and SSL modules, we use default modules from the \texttt{lightly} library \cite{Susmelj_Lightly}.

To introduce \emph{irrelevant noisy features} in the data, we use the ImageNet-C corruptions \cite{hendrycks2018benchmarking}. This benchmark provides a comprehensive suite of corruptions designed to assess model robustness. ImageNet-C corruptions span categories such as noise, blur, weather, and digital distortions, each available at multiple levels of severity.
For each experiment, we create a \emph{fixed corrupted dataset} by assigning a deterministic corruption to each image, ensuring every time an image is loaded it undergoes the same corruption. This effectively simulates a corrupted dataset, consistent across epochs. 
For the CIFAR-10 dataset, these corruptions were adapted to the specific image dimensions.

\subsubsection{ImageNet Experiments}

For the ImageNet \cite{deng2009imagenet} experiments,
we use a batchsize of 256 and train the model for 500 epochs.
We use a ResNet-50 backbone \cite{he2016deep} for BYOL \cite{grill2020bootstrap} with a learning rate of $0.45$ and a weight decay of $10^{-6}$, with the LARS optimizer \cite{you2017large}. We use a cosine scheduler from $0.99$ to $1$ for the student momentum parameter.
For MAE and DINO, we use a ViT-B/16 backbone \cite{dosovitskiy2020image}, and the AdamW optimizer \cite{loshchilov2017decoupled} with a learning rate of $1.5 \times 10^{-4}$ and a weight decay of $0.05$.

For all methods, the hyperparameters, including architectural choice, were set to the values for ImageNet presented in their respective original papers.
For augmentations, we use the default augmentations associated to BYOL, DINO and MAE from the lightly library \cite{Susmelj_Lightly} with default parameters.

\subsubsection{CIFAR10 Experiments}

In \Cref{sec:ssl_less_robust_supervised,sec:analysis_noise_injection}, we perform experiments on the CIFAR-10 \cite{krizhevsky2009learning} dataset. 
For these experiments, we use a ResNet-50 backbone \cite{he2016deep}, a batch size of 256, and train the model for 1000 epochs. The LARS optimizer \cite{you2017large} is employed with a learning rate of 5 and a weight decay of $10^{-6}$. Supervised training is conducted using the same architecture and optimization parameters. All experiments are run with 5 different random seeds.

We conduct experiments using three SSL methods: SimCLR \cite{chen2020simple}, BYOL \cite{grill2020bootstrap}, and VICReg \cite{bardes2021vicreg}.
For SimCLR, we set the temperature parameter to $\tau = 0.5$. For VICReg, we use the following default hyperparameters: a scaling coefficient of $25$ for the invariance term of the loss, $25$ for the variance term, and $1$ for the covariance term. For BYOL, we use a cosine scheduler from $0.99$ to $1$ for the student momentum parameter. 
For augmentations, we use the default augmentations associated to SimCLR, BYOL and VICReg from the \texttt{lightly} library \cite{Susmelj_Lightly} adapted to the CIFAR-10 dataset as follows: the random resized crop is set to $32$ and Gaussian blur is removed.


\subsection{Self-Supervised Learning is much more Sensitive to Corruptions than Supervised Learning}\label{sec:ssl_less_robust_supervised}

In this section, we compare the performances of SimCLR against a supervised model.

Scores for SimCLR and supervised learning with the same augmentations on corrupted datasets at various corruption strengths are shown in \Cref{fig:ssl_vs_supervised}.
We observe that the performance of SimCLR tends to degrade rapidly as the level of noise in the data increases. A similar trend is observed for supervised learning, but the decline is significantly less steep. For instance, performance on corruptions such as fog and frost remains relatively stable across corruption strengths for supervised learning, whereas for SSL, performance can drop by a factor of two (e.g., from approximately $0.8$ to $0.4$ in top-1 accuracy).

This phenomenon is confirmed by examining the learned representations of VICReg in \Cref{fig:2D_embeddings}, without and with noisy corruptions, also on CIFAR10. While the supervised model maintains a clear separation of clusters even under noisy conditions, the VICReg representations 
degrade significantly and fail to distinguish clusters effectively when data is strongly corrupted.

These observations corroborate the results proved in \Cref{sec:theory}. Indeed, the lack of labels prevents the model from compensating for any misalignment between augmentation and noise. Consequently, SSL performance deteriorates rapidly in the presence of corruptions if the data augmentations are not appropriately adapted.

\begin{figure}[t]
    \centering
    \includegraphics[width=\linewidth]{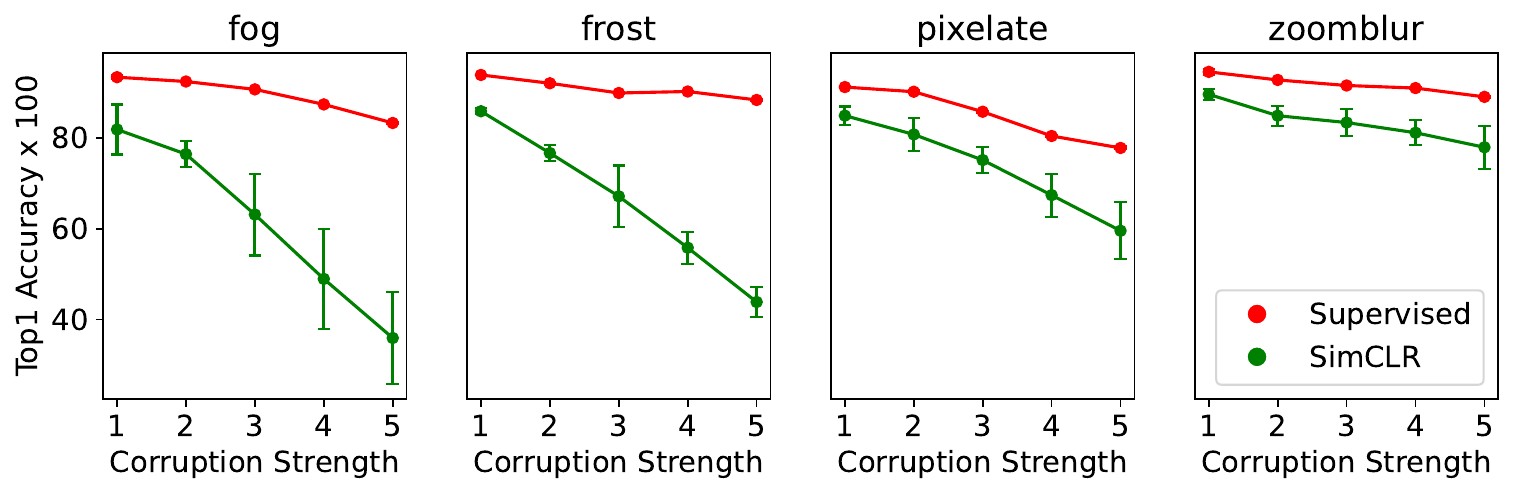}
    \caption[Top-1 accuracy under corruptions (CIFAR-10)]{Top-1 accuracy for supervised learning and self-supervised learning methods on CIFAR-10 under various corruptions with severity levels ranging from $1$ to $5$. SSL is performed using SimCLR with default augmentations while supervised learning uses the same augmentations as SimCLR. We observe that supervised learning exhibits greater robustness to data corruption compared to SSL.
    This confirms the theoretical results of \Cref{sec:theory}.}
    \label{fig:ssl_vs_supervised}
\end{figure}

\subsection{Aligning Augmentation with Known Noise}\label{sec:analysis_noise_injection}

In this section, we validate the findings of \Cref{theorem:impact_DA,theorem:impact_DA_reconstruction} and demonstrate empirically the impact of aligning the data augmentation with the \emph{irrelevant noisy features} in the data. To do so, we artificially create misalignment with usual augmentations by applying a known corruption to the dataset. We then investigate whether augmenting the data with the same type of noise can help the model learn better representations.
 
Specifically, for each view, we append a transform of the same corruption type but not necessarily the same strength, at the end of the data augmentation pipeline. We refer to the corruption applied to the dataset as the \emph{Corruption Strength} and the additional corruption appended during augmentation as the \emph{Augmentation Strength}. For each corruption tested, we evaluate the \emph{Corruption Strength} over the set $\{1, 2, 3, 4, 5\}$ and the \emph{Augmentation Strength} over the set $\{0, 1, 2\}$.

Experiments are conducted on CIFAR10 and results are displayed in \Cref{fig:results_heatmaps_c10}. 
For each heatmap, we are interested in verifying if the top two rows corresponding to \emph{augmentation strength} = 1 and 2 yield better results than the bottom row without noise injection (\emph{augmentation strength} = 0).
Looking at SimCLR runs, we observe that \emph{noise injection} generally boosts performance in most settings. This holds true for all $16$ corruptions, except for spatter and brightness, where an augmentation strength of $0$ performs best. Some corruptions clearly demonstrate that noise injection improves performance regardless of the corruption strength. For instance, this is evident for fog, Gaussian blur, and glass blur. For other corruptions, the utility of noise injection depends on the strength of the noise. For example, frost, saturate, and snow benefit from noise injection when the corruption in the input data is strong, whereas for impulse noise, JPEG compression, and defocus blur, noise injection in the augmentation pipeline is more effective when the corruption in the input data is not severe. 
In some cases, the improvements are substantial; for instance, for fog with a corruption strength of $4$, adding noise injection with a strength of $2$ increases the top-1 accuracy from $49.0$ to $67.1$.
Across all configurations of noise and corruption strength, a total of $80$ combinations are tested with SimCLR, comprising $16$ corruption types at $5$ strength levels. Noise injection leads to an improvement in top-1 accuracy in $67.5\%$ of these cases.

We observe that these trends are quite consistent across various SSL methods. For VICReg, noise injection improves top-1 accuracy in $85\%$ of the configurations, while for BYOL, this improvement is observed in $60\%$ of the tested configurations.  
For VICReg, noise injection results in significant improvements; for instance, in the case of fog with \emph{corruption strength} = 5, the top-1 accuracy increases dramatically from $43.5$ to $70.9$. This performance improvement is evident in \Cref{fig:2D_embeddings}, where the clusters corresponding to class labels are more clearly separated when noise injection is applied. For completeness, we provide in \Cref{fig:results_heatmaps_c10_clean_test_set} the top-1 linear probing accuracy scores evaluated on a clean test set (standard CIFAR-10 test set). We observe that for SimCLR, noise injection improves top-1 accuracy in 68.75\% of the configurations, while for both BYOL and VICReg, it enhances performance in 85\% of the configurations.

These outcomes confirm the results of \Cref{theorem:impact_DA}. Aligning augmentations with noise directs the model's focus to important features, thereby enhancing SSL representations. Note that our theory does not cover cases where data and noise components are intertwined, as seen with \eg spatter, where strong noise may discard important data features and render \emph{noise injection} less effective.
Finally, one key observation is that, even under substantial data corruption, a small injection of noise during augmentation can still yield benefits. Hence the augmentation noise strength does not need to be precisely tuned to match the severity of the corruption.

\begin{figure*}[!h]
    \centering
    \includegraphics[width=\linewidth]{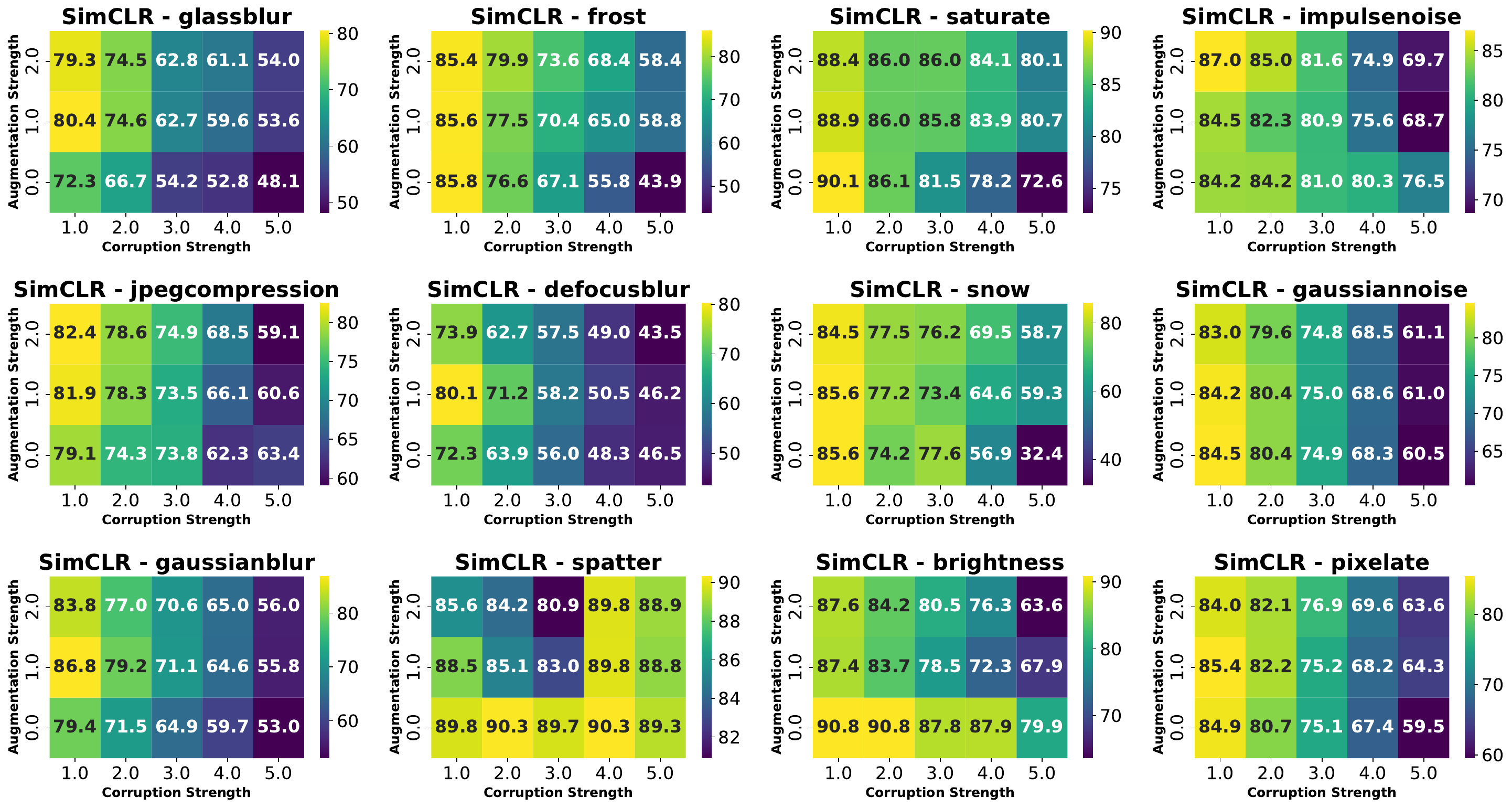}
    \vspace{-0.25cm} \\ \includegraphics[width=\linewidth]{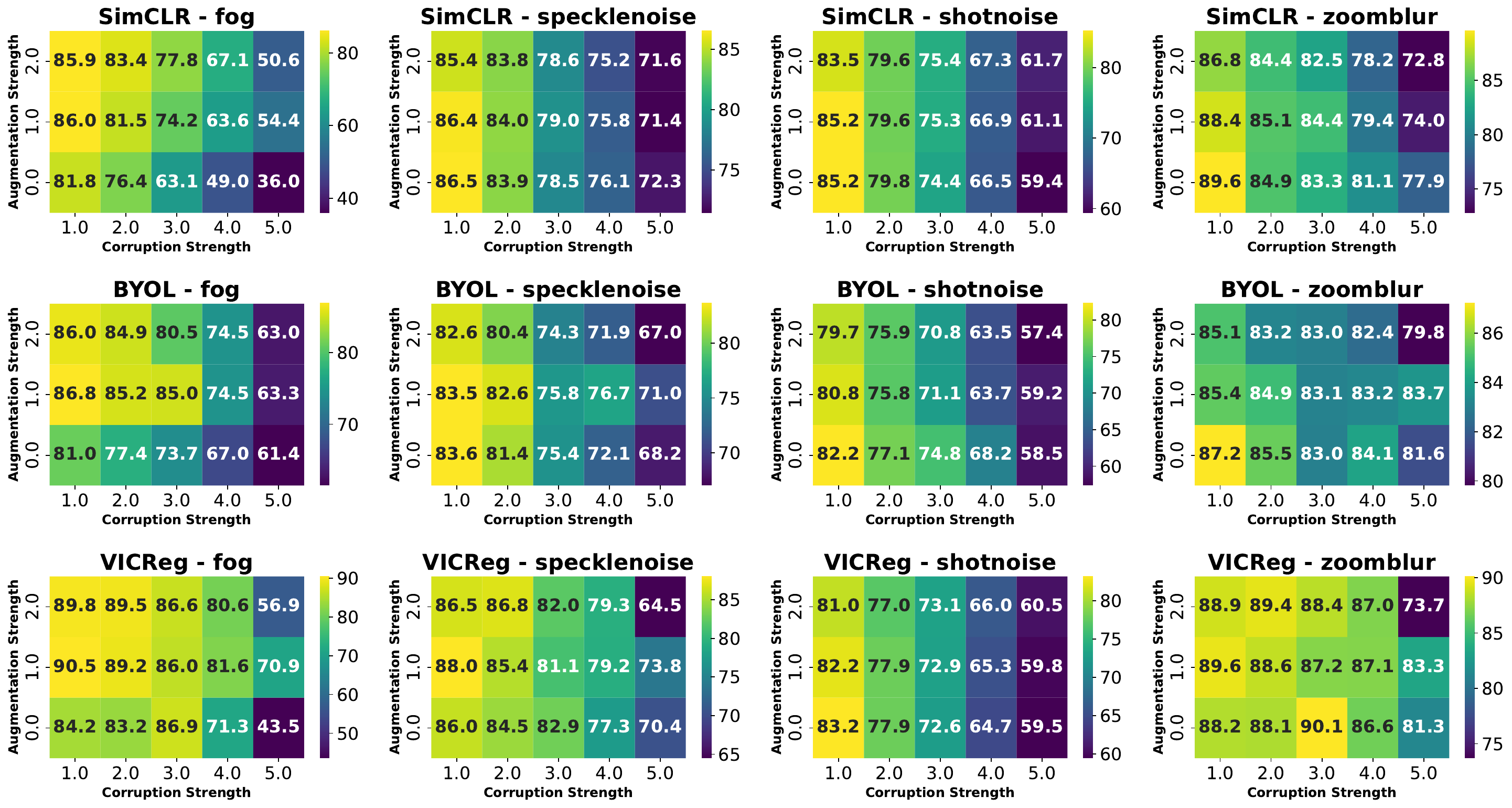}
    \caption{Top-1 linear probing accuracy of SSL methods evaluated on CIFAR10 test set with the same corruption type and severity as the training set. The experiments investigate varying levels of corruption severity in the input data (Corruption Strength) and different levels of noise injection severity in the augmentation pipeline (Augmentation Strength). The noise injection corruption type matches the data corruption type. Each reported value is an average over $5$ random seeds. The first four rows present SimCLR results across $16$ distinct corruptions: glassblur, frost, saturate, impulsenoise, jpegcompression, defocusblur, snow, gaussiannoise, gaussianblur, spatter, brightness, pixelate, fog, specklenoise, shotnoise and zoomblur. These transformations are sourced from Imagenet-C \cite{hendrycks2018benchmarking}. The bottom two rows display results for BYOL and VICReg on a subset of $4$ corruptions: fog, specklenoise, shotnoise and zoomblur.}
    \label{fig:results_heatmaps_c10}
\end{figure*}

\begin{figure*}[!t]
    \centering
    \includegraphics[width=\linewidth]{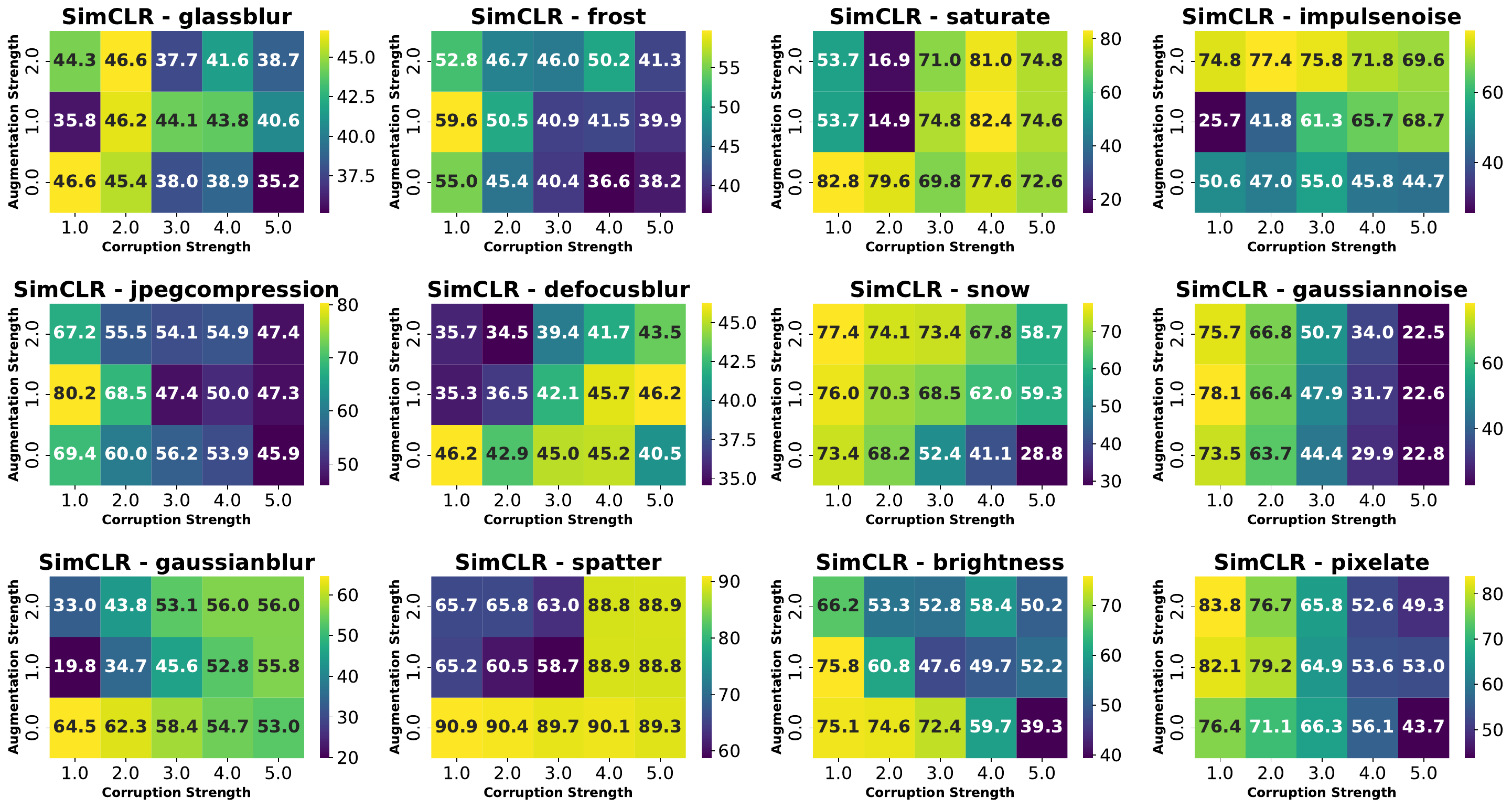}
    \vspace{-0.25cm} \\ \includegraphics[width=\linewidth]{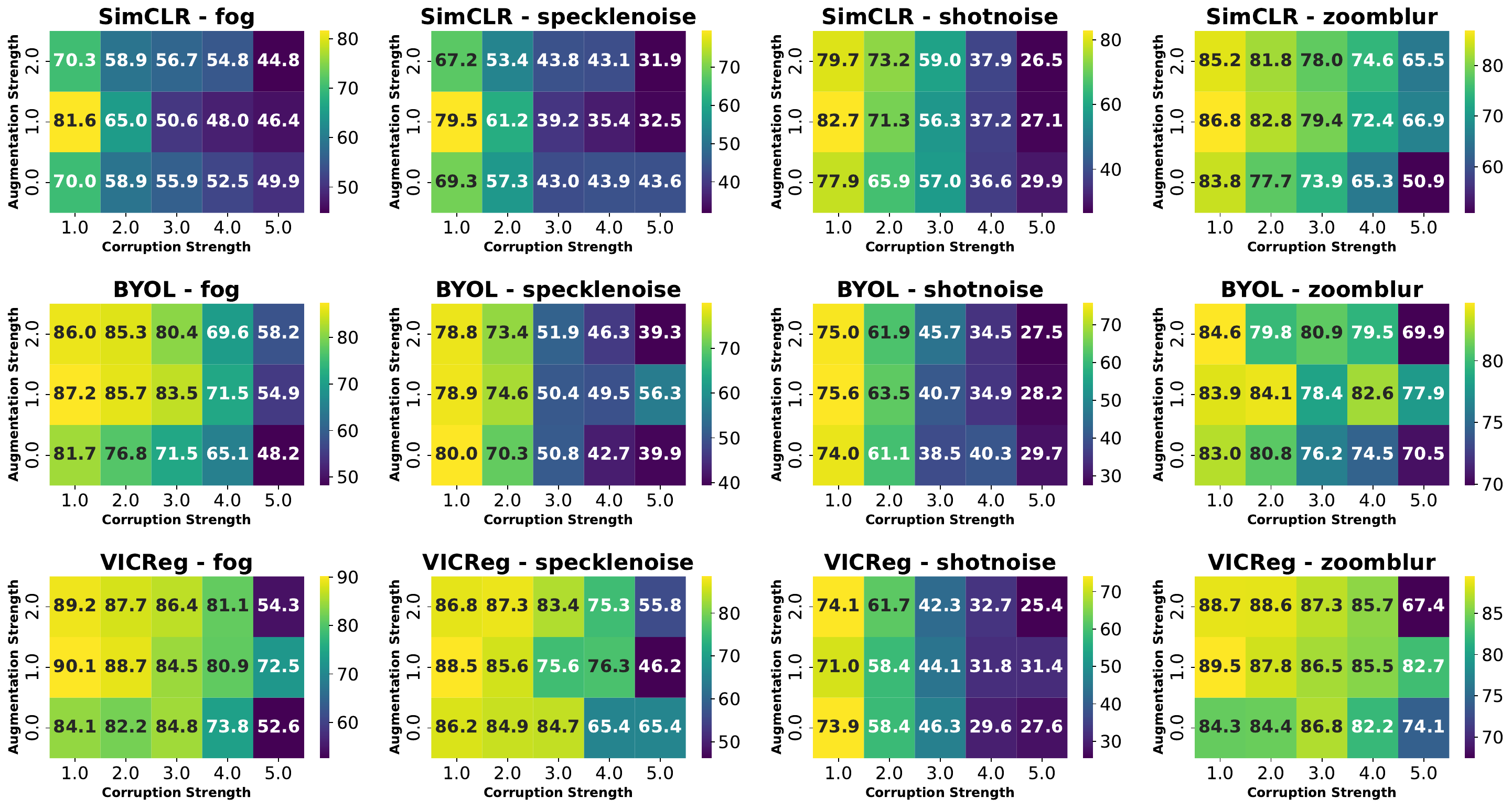}
    \caption{Top-1 linear probing accuracy for SSL methods evaluated on the clean (uncorrupted) CIFAR10 test set. The experiments investigate varying levels of corruption severity in the input data (Corruption Strength) and different levels of noise injection severity in the augmentation pipeline (Augmentation Strength). The noise injection corruption type matches the data corruption type. Each reported value is an average over $5$ random seeds. The first four rows present SimCLR results across $16$ distinct corruptions: glassblur, frost, saturate, impulsenoise, jpegcompression, defocusblur, snow, gaussiannoise, gaussianblur, spatter, brightness, pixelate, fog, specklenoise, shotnoise and zoomblur. These transformations are sourced from Imagenet-C \cite{hendrycks2018benchmarking}. The bottom two rows display results for BYOL and VICReg on a subset of $4$ corruptions: fog, specklenoise, shotnoise and zoomblur.}
    \label{fig:results_heatmaps_c10_clean_test_set}
\end{figure*}



\end{document}